\newcommand\hl[1]{{#1}}
\pgfplotsset{compat=1.3}
\definecolor{blue_1}{RGB}{0,90,169}
\definecolor{blue_2}{RGB}{0,131,204}
\definecolor{blue_1b}{RGB}{93,133,195}
\definecolor{blue_2b}{RGB}{0,156,218}
\definecolor{gray_1}{RGB}{142,152,178}
\long\def\ignore#1{}
\newcommand{\types}{\mathcal{T}}
\newcommand{\Var}{\mathcal{V}}
\newcommand{\syntrue}{\top}
\newcommand{\fv}{\mathrm{fv}}
\newcommand{\pospol}{\mathfrak{t\!t}}
\newcommand{\negpol}{\mathfrak{f\!f}}
\newcommand{\emptyclause}{\square}
\newcommand{\clause}[1]{\mathcal{#1}}
\newcommand{\calculus}{\text{EP}}
\newcommand{\paramodrule}{(Para)}
\newcommand{\paramodapp}{\ensuremath{\mathrm{Para}}}
\newcommand{\factorrule}{(EqFac)}
\newcommand{\factorapp}{\ensuremath{\mathrm{EqFac}}}
\newcommand{\primsubstrule}{(PS)}
\newcommand{\primsubstapp}{\ensuremath{\mathrm{PS}}}
\newcommand{\cnfapp}{\ensuremath{\mathsf{CNF}}}
\newcommand{\uniapp}{\ensuremath{\mathsf{UNI}}}
\newcommand{\funcposrule}{(PFE)}
\newcommand{\funcposapp}{\ensuremath{\mathrm{PFE}}}
\newcommand{\funcnegrule}{(NFE)}
\newcommand{\boolposrule}{(PBE)}
\newcommand{\boolposapp}{\ensuremath{\mathrm{PBE}}}
\newcommand{\boolnegrule}{(NBE)}
\newcommand{\trivrule}{(Triv)}
\newcommand{\bindrule}{(Bind)}
\newcommand{\decomprule}{(Decomp)}
\newcommand{\flexrigidrule}{(FlexRigid)}
\newcommand{\flexflexrule}{(FlexFlex)}
\newcommand{\sk}{\mathit{sk}}
\newcommand{\FV}{{X}}
\newcommand{\approxbinding}[2]{\ensuremath{\mathcal{GB}_{#1}^{#2}}}
\newtheorem{theorem}{Theorem}
\newtheorem{example}{Example}
\begin{document}

\title{Extensional Higher-Order Paramodulation in Leo-III\thanks{This work has
been supported by the DFG under grant BE 2501/11-1 (Leo-III) and by
the Volkswagenstiftung (''Consistent Rational Argumentation in Politics'').}}
%\subtitle{Do you have a subtitle?\\ If so, write it here}

%\titlerunning{Short form of title}        % if too long for running head

\author{Alexander Steen\and
        Christoph Benzm\"uller
}

%\authorrunning{Short form of author list} % if too long for running head

\ignore{
\institute{A. Steen \href{https://orcid.org/0000-0001-8781-9462}{[0000-0001-8781-9462]} \at
              University of Luxembourg, FSTC,
              Esch-sur-Alzette, Luxembourg.\\
              %Freie Universit\"at Berlin, Department of Mathematics and Computer Science, Berlin, Germany.\\
              \email{alexander.steen@uni.lu}
           \and
           C. Benzm\"uller \href{https://orcid.org/0000-0002-3392-3093}{[0000-0002-3392-3093]} \at
              Freie Universit\"at Berlin, Dep. of Mathematics and Computer Science, Berlin, Germany, and\\
              University of Luxembourg, FSTC, Esch-sur-Alzette, Luxembourg.\\
              \email{c.benzmueller@fu-berlin.de}
}
}

%\date{Received: date / Accepted: date}
% The correct dates will be entered by the editor

\date{\texttt{alexander.steen@uni.lu}, \texttt{c.benzmueller@fu-berlin.de}}

\maketitle

\begin{abstract}
Leo-III is an automated theorem prover for extensional type theory
with Henkin semantics and choice. Reasoning with primitive equality
is enabled by adapting paramodulation-based proof search to
higher-order logic. The prover may cooperate with multiple external
specialist reasoning systems such as first-order provers and SMT
solvers. Leo-III is compatible with the TPTP/TSTP
framework for input formats, reporting results and proofs,
and standardized communication between reasoning systems,
enabling e.g. proof reconstruction from within proof assistants
such as Isabelle/HOL.

Leo-III supports reasoning in polymorphic first-order and higher-order
logic, in many quantified normal modal logics, as well as in
different deontic logics.
Its development had initiated the ongoing extension of the TPTP
infrastructure to reasoning within non-classical logics.

%\keywords{

\textbf{Keywords} Higher-Order Logic \and Henkin Semantics \and Extensionality
\and Leo-III
\and Equational Reasoning \and
Automated Theorem Proving \and Non-Classical Logics
\and Quantified Modal Logics

%}
\end{abstract}

%%%%%%%%%%%%%%%%%%%%%%%%%%%%%%%%%%%%%%%%%%%%%%%%%%%%%%%%%%%%%%%%%%%%%%%%%%%%%
%%%%%%%%%%%%%%%%%%%%%%%%%%%%%%%%%%%%%%%%%%%%%%%%%%%%%%%%%%%%%%%%%%%%%%%%%%%%%
%%%%%%%%%%%%%%%%%%%%%%%%%%%%%%%%%%%%%%%%%%%%%%%%%%%%%%%%%%%%%%%%%%%%%%%%%%%%%

\section{Introduction} \label{intro}
% Brief sentences on what Leo-III is 

Leo-III is an automated theorem prover (ATP) for classical higher-order logic (HOL)
with Henkin semantics and choice. In contrast to its predecessors,
LEO and LEO-II~\cite{DBLP:conf/cade/BenzmullerK98a,DBLP:journals/jar/BenzmullerSPT15},
that were based on resolution proof search, Leo-III implements a higher-order
paramodulation calculus, which aims at improved performance for equational reasoning~\cite{steen2018}.
In the tradition of the Leo prover family, Leo-III collaborates with external reasoning
systems, in particular, with first-order ATP systems, such as E~\cite{Schulz:2002:EBT:1218615.1218621},
iProver~\cite{DBLP:conf/cade/Korovin08} and Vampire~\cite{DBLP:journals/aicom/RiazanovV02},
and with SMT solvers such as CVC4~\cite{DBLP:conf/cav/BarrettCDHJKRT11}.
Cooperation is not restricted to first-order systems, and further specialized systems such as
higher-order (counter-)model finders may be utilized by Leo-III.

Leo-III accepts all common TPTP dialects~\cite{DBLP:journals/jar/Sutcliffe17}
as well as their recent extensions to polymorphic types~\cite{DBLP:conf/cade/BlanchetteP13,DBLP:conf/cade/KaliszykSR16}.
During the development of Leo-III, careful attention has been paid to providing maximal
compatibility with existing systems and conventions of the peer community, especially
to those of the TPTP infrastructure.
The prover returns results according to the standardized TPTP SZS ontology,
and it additionally produces verifiable TPTP-compatible proof certificates
for each proof that it finds.

% Relate to Leo prover family
The predecessor systems LEO and LEO-II
pioneered the area of cooperative resolution-based theorem proving for Henkin semantics.
LEO (or LEO-I) was designed as an ATP component of the
proof assistant and proof planner $\Omega$MEGA~\cite{DBLP:journals/japll/SiekmannBA06}
and hard-wired to it.
%Its task was to automatically solve given subgoals originating from the
%$\Omega$MEGA system in order to minimize the need for user interaction. The LEO system
%already supported calculus-level treatment of extensionality principles and used first-order
%ATP systems for cooperation. However, LEO lacked proper handling of native equality as it
%expanded occurrences of equality by Leibniz' definition. 
%Also, LEO was hard-wired into the $\Omega$MEGA system and could not be used as a stand-alone
%ATP system.
Its successor, LEO-II, is a stand-alone HOL ATP system based on
Resolution by Unification and Extensionality (RUE)~\cite{DBLP:conf/cade/Benzmuller99},
and it supports reasoning with primitive equality.

%whats different in Leo-III?
The most recent incarnation of the Leo prover family, Leo-III,
comes with improved reasoning performance in particular for equational problems, and with a more flexible
and effective architecture for cooperation with external specialists.\footnote{
Note the different capitalization of Leo-III as opposed to LEO-I and LEO-II. This is motivated
by the fact that Leo-III is designed to be a general purpose system rather than a subcomponent
to another system. Hence, the original capitalization derived from the phrase
''\underline{L}ogical \underline{E}ngine for \underline{O}mega'' (LEO) is not continued.
}
Reasoning in higher-order quantified non-classical logics, including many normal modal logics,
and different versions of deontic logic is enabled by an integrated
shallow semantical embedding approach~\cite{DBLP:journals/lu/BenzmullerP13}.
In contrast to other HOL ATP systems, including LEO-II, 
for which it was necessary for the user to manually conduct the tedious and error-prone
encoding procedure before passing it to the prover, Leo-III is the first
ATP system to include a rich library of these embeddings, transparent
to the user~\cite{DBLP:conf/lpar/GleissnerSB17}.
These broad logic competencies make Leo-III, up to the authors' knowledge,
the most widely applicable ATP system
for propositional and quantified, classical and non-classical logics 
available to date.
This work has also stimulated the currently ongoing extension of
the TPTP library to non-classical reasoning.\footnote{
See \url{http://tptp.org/TPTP/Proposals/LogicSpecification.html}.
}

%practical info, availability
Leo-III is implemented in Scala and its source code, and that of related projects presented
in this article, is publicly available under BSD-3 license on GitHub.\footnote{
    See the individual projects related to the Leo prover family
    at \url{https://github.com/leoprover}. Further information are available at 
    \url{http://inf.fu-berlin.de/~lex/leo3}.
}
Installing Leo-III does not require any special libraries, apart from a reasonably
current version of the JDK and Scala. Also, Leo-III is readily available via
the SystemOnTPTP web interface~\cite{DBLP:journals/jar/Sutcliffe17}, and
it can be called via Sledgehammer~\cite{DBLP:journals/jar/BlanchetteBP13},
from the interactive proof assistant Isabelle/HOL~\cite{DBLP:books/sp/NipkowPW02}
for automatically discharging user's proof goals.

In a recent evaluation study of 19 different first-order and higher-order ATP
systems, Leo-III was found the most versatile (in terms of supported
logic formalisms) and best performing ATP system overall~\cite{grunge}.

% directly related publications
This article presents a consolidated summary of previous conference and workshop
contributions~\cite{DBLP:conf/lpar/BenzmullerSW17,%
DBLP:conf/cade/SteenB18,DBLP:conf/icms/SteenWB16,%
DBLP:conf/lpar/SteenWB17,DBLP:conf/mkm/WisniewskiSB15}
as well as contributions from the first author's PhD thesis~\cite{steen2018}.
% Structure
It is structured as follows: \S\ref{sec:holatp} briefly introduces HOL and 
summarizes challenging automation aspects. In \S\ref{sec:paramod} the basic paramodulation
calculus that is extended by Leo-III is presented, and practically
motivated extensions that are implemented in the prover are outlined. Subsequently,
the implementation of Leo-III is described in more detail in \S\ref{sec:impl},
and \S\ref{sec:ncl} presents the technology that enables Leo-III to reason
in various non-classical logics. An evaluation of Leo-III on a heterogeneous
set of benchmarks, including problems in non-classical logics,
is presented in \S\ref{sec:eval}.
Finally, \S\ref{sec:conclusion} concludes this article and sketches further work.

%%%%%%%%%%%%%%%%%%%%%%%%%%%%%%%%%%%%%%%%%%%%%%%%%%%%%%%%%%%%%%%%%%%%%%%%%%%%%
%%%%%%%%%%%%%%%%%%%%%%%%%%%%%%%%%%%%%%%%%%%%%%%%%%%%%%%%%%%%%%%%%%%%%%%%%%%%%
%%%%%%%%%%%%%%%%%%%%%%%%%%%%%%%%%%%%%%%%%%%%%%%%%%%%%%%%%%%%%%%%%%%%%%%%%%%%%

\section{Higher-Order Theorem Proving}\label{sec:holatp}
The term higher-order logic refers to expressive logical formalisms that allow
for quantification over predicate and function variables; such a logic
was first studied by Frege in the 1870s~\cite{GlossarWiki:Frege:1879}.
An alternative and more handy formulation was proposed by Church
in the 1940s~\cite{DBLP:journals/jsyml/Church40}.
He defined a higher-order logic on 
top of the simply typed $\lambda$-calculus.
His particular formalism,
referred to as Simple Type Theory (STT), was later further studied and refined
by Henkin~\cite{DBLP:journals/jsyml/Henkin50},
Andrews~\cite{DBLP:journals/jsyml/Andrews71,DBLP:journals/jsyml/Andrews72,DBLP:journals/jsyml/Andrews72a}
and others~\cite{DBLP:journals/jsyml/BenzmullerBK04,DBLP:journals/jsyml/Muskens07}.
In the remainder, the term HOL is used synonymously to Henkin's Extensional Type Theory
(ExTT)~\cite{DBLP:series/hhl/BenzmullerM14}; it constitutes the foundation of many contemporary
higher-order automated reasoning systems.
HOL provides lambda-notation as an
elegant and useful means to denote unnamed functions, predicates and
sets (by their characteristic functions), and it comes with built-in principles
of Boolean and functional extensionality as well as type-restricted comprehension.

A more in-depth presentation of HOL, its historical development, metatheory and
automation is provided by Benzmüller and Miller~\cite{DBLP:series/hhl/BenzmullerM14}.

\paragraph{Syntax and Semantics.}
HOL is a typed logic; every term of HOL is associated a fixed and unique type, written
as subscript. 
The set $\types$ of simple types is freely generated from a non-empty set $S$
of sort symbols (base types) and juxtaposition $\nu\tau$ of two types $\tau,\nu \in \types$,
the latter
denoting the type of functions from objects of type $\tau$ to objects of type $\nu$.
Function types are assumed to associate to the left and parentheses may be dropped if
consistent with the intended reading.
The base types are usually chosen to be $S := \{\iota, o\}$,
where $\iota$ and $o$ represent the type of individuals and the type of Boolean truth values,
respectively.

Let $\Sigma$ be a typed signature and let $\Var$ denote a set of typed variable symbols
such that there exist infinitely many variables for each type. Following
Andrews~\cite{andrews2002introduction}, it is assumed that the only primitive logical
connective is equality, denoted $=^\tau_{o\tau\tau} \in \Sigma$, for each type $\tau \in \types$ (called $Q$
by Andrews).
In the extensional setting of HOL, all remaining logical connectives such as disjunction $\lor_{ooo}$,
conjunction $\land_{ooo}$, negation $\neg_{oo}$, etc., can be defined in terms of them.
The terms of HOL are given by the following abstract syntax (where $\tau,\nu \in \types$ are types):
\begin{equation*}
s,t ::= c_\tau \in \Sigma \; | \; X_\tau \in \Var \; | \; \left(\lambda X_\tau.\, s_\nu\right)_{\nu\tau}
  \; | \; \left(s_{\nu\tau} \; t_\tau\right)_\nu
\end{equation*}
The terms are called {constants}, {variables}, {abstractions}
and {applications}, respectively. 
Application is assumed to associate to the left and parentheses may again be dropped whenever possible.
Conventionally, vector notation
$f_{\nu\tau^n \cdots \tau^1} \; \overline{t^i_{\tau^i}}$ is used to abbreviate
nested applications $\big(f_{\nu\tau^n \cdots \tau^1} \; t^1_{\tau^1} \; \cdots \; t^n_{\tau^n}\big)$,
where $f$ is a function term and the $t^i$, $1 \leq i \leq n$, are argument terms
of appropriate types.
The type of a term may be dropped for legibility reasons if obvious from the context.
Notions such as $\alpha$-, $\beta$-, and $\eta$-conversion, denoted
$\longrightarrow_\star$, for $\star \in \{\alpha, \beta, \eta\}$,
free variables $\fv(.)$ of a term, etc., are
defined as usual~\cite{DBLP:books/daglib/0032840}. 
\hl{The notion $s\{t/X\}$ is used to denote the (capture-free) substitution 
of variable $X$ by term $t$ in $s$.}
Syntactical equality between HOL terms, denoted $\equiv_\star$,
for $\star \subseteq \{\alpha, \beta, \eta\}$, is defined with respect to the assumed
underlying conversion rules.
Terms $s_o$ of type $o$ are {formulas}, and they are {sentences} if they are
closed. By convention, infix notation for fully applied logical connectives 
is used, e.g. $s_o \lor t_o$ instead of $(\lor_{ooo} \; s_o) \; t_o$.
 
As a consequence of G\"odel's Incompleteness Theorem, HOL with standard semantics
is necessarily incomplete. In contrast, theorem proving in HOL is usually considered with respect
to so-called general semantics (or Henkin semantics) in which a meaningful notion of
completeness can be achieved~\cite{DBLP:journals/jsyml/Henkin50,DBLP:journals/jsyml/Andrews72a}. 
The usual notions of general model structures, validity in these structures and related notions
are assumed in the following. Note that we do not assume that the general model structures
validate choice.
Intensional models have been described by Muskens~\cite{DBLP:journals/jsyml/Muskens07}
and studies of further general notions of semantics have been presented by
Andrews~\cite{DBLP:journals/jsyml/Andrews71} and
Benzm\"uller et al.~\cite{DBLP:journals/jsyml/BenzmullerBK04}.

\paragraph{Challenges to HOL Automation.}

HOL validates functional and Boolean extensionality principles,
referred to as $\mathrm{EXT}^{\nu\tau}$ and $\mathrm{EXT}^{o}$.
These principles
can be formulated within HOL's term language as
\begin{equation*}\begin{split}
\mathrm{EXT}^{\nu\tau} &:= \forall F_{\nu\tau}.\,\forall G_{\nu\tau}.\,(\forall X_\tau.\,F \; X =^\nu G \; X) \Rightarrow F =^{\nu\tau} G\\
\mathrm{EXT}^{o} &:= \forall P_o.\,\forall Q_o.\,(P \Leftrightarrow Q) \Rightarrow P =^o Q
\end{split}\end{equation*}
These principles state that two functions are equal if they correspond on every argument,
and that two formulas are equal if they are equivalent
(where $\Leftrightarrow_{ooo}$ denotes equivalence), respectively.
Using these principles, one can infer that two functions such as $\lambda P_o.\, \syntrue$
and $\lambda P_o.\, P \lor \neg P$ are in fact equal (where $\syntrue$ denotes syntactical truth),
and that $\left(\lambda P_o.\,\lambda Q_o.\,P \lor Q\right) = \left(\lambda P_o.\,\lambda Q_o.\, Q \lor P\right)$
is a theorem. 
Boolean Extensionality, in particular, poses a considerable challenge for HOL automation:
Two terms may be equal,
and thus subject to generating inferences, if 
the equivalence of all Boolean-typed subterms can be inferred. As a consequence,
a complete implementation of
non-ground proof calculi that make use of higher-order unification procedures
cannot simply use syntactical unification for locally deciding which inferences
are to be generated. 
In contrast to first-order theorem proving, it is hence
necessary to interleave syntactical unification and (semantical) proof search, which is
more difficult to control in practice.

As a further complication, higher-order unification is only semi-decidable and not
unitary~\cite{huet1973undecidability,goldfarb1981undecidability}.
It is not clear how many and which unifiers produced by a higher-order unification routine should
be chosen during proof search, and the unification procedure may never terminate on non-unifiable
terms.

In the context of first-order logic with equality, 
superposition-based calculi have proven
an effective basis for reasoning systems and provide a powerful notion of redundancy~\cite{DBLP:conf/cade/BachmairG90,DBLP:conf/cade/NieuwenhuisR92,DBLP:journals/logcom/BachmairG94}.
%Superposition calculi make use of so-called term orderings that impose a partial ordering
%on terms and allow to reduce the number of generated inferences while retaining
%completeness results. 
%Unfortunately, simple adaptions of such term orderings do not exist for the full language of
%HOL. 
Reasoning with equality can also be addressed, e.g., by an RUE resolution
approach~\cite{DBLP:journals/jacm/DigricoliH86} and, in the higher-order case,
by reducing equality to equivalent formulas not containing
the equality predicate~\cite{DBLP:conf/cade/Benzmuller99}, as done in LEO.
The latter approaches however lack effectivity in practical applications
of large scale equality reasoning.

There are further practical challenges as there are only few implementation techniques
available for efficient data structures and indexing methods.
This hampers the effectivity of HOL reasoning systems and their application in practice.

\paragraph{HOL ATP Systems.} Next to the LEO prover family~\cite{DBLP:conf/cade/BenzmullerK98a,DBLP:journals/jar/BenzmullerSPT15,DBLP:conf/cade/SteenB18}, there
are further HOL ATP systems available: This includes
TPS~\cite{DBLP:journals/japll/AndrewsB06} as one of the earliest systems,
as well as Satallax~\cite{DBLP:conf/cade/Brown12},
coqATP~\cite{DBLP:series/txtcs/BertotC04}, agsyHOL~\cite{DBLP:conf/cade/Lindblad14}
and the higher-order (counter)model finder Nitpick~\cite{DBLP:conf/itp/BlanchetteN10}.
Additionally, there is ongoing work on extending the first-order theorem prover Vampire
to full higher-order reasoning~\cite{DBLP:conf/cade/BhayatR18,bhayat2019},
and some interactive proof assistants
such as Isabelle/HOL~\cite{DBLP:books/sp/NipkowPW02} can also be used for
automated reasoning in HOL. 
Further related systems include higher-order extensions
of SMT solvers~\cite{barbosa2019extending}, and there is ongoing work
to lift first-order ATP systems based on superposition to fragments of HOL, 
including E~\cite{Schulz:2002:EBT:1218615.1218621,DBLP:conf/tacas/VukmirovicBCS19}
and Zipperposition~\cite{DBLP:phd/hal/Cruanes15,DBLP:conf/cade/BentkampBCW18}.

Further notable higher-order reasoning systems include proof assistants such as 
PVS~\cite{DBLP:conf/cade/OwreRS92}, Isabelle/HOL, the HOL prover family including HOL4~\cite{gordon1993introduction},
and the HOL Light system~\cite{DBLP:conf/tphol/Harrison09a}. In contrast to ATP systems, proof assistants
do not finds proofs automatically but are rather used to formalize and verify hand-written proofs for
correctness.

\paragraph{Applications.}
The expressivity of higher-order logic has been exploited
for encoding various expressive non-classical logics within HOL.
Semantical embeddings of, among others, higher-order modal
logics~\cite{DBLP:journals/lu/BenzmullerP13,DBLP:conf/lpar/GleissnerSB17},
conditional logics~\cite{DBLP:journals/jphil/Benzmuller17},
many-valued logics~\cite{steen2016sweet}, deontic logic~\cite{DBLP:conf/deon/BenzmullerFP18},
free logics~\cite{DBLP:conf/icms/BenzmullerS16}, and combinations
of such logics~\cite{DBLP:journals/amai/Benzmuller11} can be used to
automate reasoning within those logics
using ATP systems for classical HOL.
A prominent result from the applications of automated reasoning in non-classical
logics, here in quantified modal logics, was the detection of a major flaw
in G\"odel's Ontological Argument~\cite{DBLP:journals/afp/FuenmayorB17,J36}
as well as the verification of Scott's variant of that argument~\cite{DBLP:conf/ijcai/BenzmullerP16}
using LEO-II and Isabelle/HOL.
Similar and further enhanced techniques were used to assess
foundational questions in metaphysics~\cite{DBLP:journals/lu/BenzmullerWP17,J47}. 

Additionally, Isabelle/HOL and the Nitpick system were used
to assess the correctness of concurrent C++ programs against
a previously formalized memory model~\cite{DBLP:conf/ppdp/BlanchetteWBOS11}.
The higher-order proof assistant HOL Light
played a key role in the verification of
Kepler's conjecture within the
Flyspeck project~\cite{DBLP:journals/corr/HalesABDHHKMMNNNOPRSTTTUVZ15}.
  
%%%%%%%%%%%%%%%%%%%%%%%%%%%%%%%%%%%%%%%%%%%%%%%%%%%%%%%%%%%%%%%%%%%%%%%%%%%%%
%%%%%%%%%%%%%%%%%%%%%%%%%%%%%%%%%%%%%%%%%%%%%%%%%%%%%%%%%%%%%%%%%%%%%%%%%%%%%
%%%%%%%%%%%%%%%%%%%%%%%%%%%%%%%%%%%%%%%%%%%%%%%%%%%%%%%%%%%%%%%%%%%%%%%%%%%%%

\section{Extensional Higher-Order Paramodulation} \label{sec:paramod}
Leo-III is a refutational ATP system. The initial, possibly empty, set of axioms and
the negated conjecture are transformed into an equisatisfiable set of formulas
in clause normal form (CNF), which is then iteratively saturated
until the empty clause is found.
Leo-III extends the complete, paramodulation-based calculus \calculus\ for HOL
(cf. \S\ref{ssec:ep}) with practically motivated, partly heuristic inference
rules.
Paramodulation extends resolution by a native treatment of equality at the calculus level.
In the context of first-order logic, it was developed in the late 1960s by G. Robinson and
L. Wos~\cite{robinson1969paramodulation}
as an attempt to overcome the shortcomings of resolution-based approaches 
to handling equality. A paramodulation inference
incorporates the principle of {replacing equals by equals} and can
be regarded as a speculative conditional rewriting step.
In the context of first-order theorem proving, superposition-based calculi
\cite{DBLP:conf/cade/BachmairG90,DBLP:conf/cade/NieuwenhuisR92,DBLP:journals/logcom/BachmairG94}
improve the naive paramodulation approach by imposing ordering restrictions on the
inference rules such that only a relevant subset of all possible inferences are generated.
However, due to the more complex structure of the term language of HOL,
there do not exist suitable term orderings that allow a straightforward adaption
of this approach to the higher-order setting.\footnote{
  As a simple counterexample, consider a (strict) term ordering $\succ$ for HOL terms
  that satisfies the usual properties from first-order superposition (e.g.,
  the subterm property) and is compatible with $\beta$-reduction. For
  any nonempty signature $\Sigma$, $\mathrm{c} \in \Sigma$, the chain 
  $\mathrm{c} \equiv_\beta (\lambda X.\, \mathrm{c}) \; \mathrm{c} \succ \mathrm{c}$
  can be constructed, implying $\mathrm{c} \succ \mathrm{c}$ and thus contradicting irreflexivity of $\succ$.
  Note that $(\lambda X.\, \mathrm{c}) \; \mathrm{c} \succ \mathrm{c}$
  since the right-hand side is a proper subterm of the left-hand side
  (assuming an adequately lifted definition of subterm property to HO terms).
} However, there is recent work to overcome this situation by relaxing 
restrictions on the employed orderings~\cite{DBLP:conf/cade/BentkampBTVW19}.

\subsection{The EP Calculus\label{ssec:ep}}
Higher-order paramodulation for extensional type theory was first presented by
Benzm\"uller~\cite{DBLP:phd/dnb/Benzmuller99,DBLP:conf/cade/Benzmuller99}.
This calculus was mainly theoretically motivated and extended a resolution calculus with a
paramodulation rule instead of being based on a paramodulation rule alone. Additionally,
that calculus contained a rule that expanded equality literals
by their definition due to Leibniz.\footnote{
  The {Identity of Indiscernibles} (also known as {Leibniz's law}) refers to a principle
  first formulated
  by Gottfried Leibniz in the context of theoretical philosophy~\cite{Leibniz1989}.
  The principle states that if two objects $X$ and $Y$ coincide on every property $P$,
  then they are equal,
  i.e. $\forall X_\tau.\,\forall Y_\tau.\,\left(\forall P_{o\tau}. P \; X \Leftrightarrow P \; Y\right) \Rightarrow X = Y$,
  where ''$=$''
  denotes the desired equality predicate.
  Since this principle can easily be formulated in HOL, it is possible
    to encode equality in higher-order logic without using the primitive equality
    predicate. An extensive analysis of the intricate differences between
    primitive equality and defined notions of equality is presented by
    Benzm\"uller et al.~\cite{DBLP:journals/jsyml/BenzmullerBK04} to which the authors refer for further details.
}
As Leibniz equality formulas effectively enable
cut-simulation~\cite{DBLP:journals/corr/abs-0902-0043}, the proposed calculus seems unsuited for automation.
The calculus \calculus\ presented in the following, in contrast, avoids the expansion of equality predicates but
adapts the use of dedicated calculus rules for extensionality principles from Benzmüller~\cite{DBLP:conf/cade/Benzmuller99}.

\hl{An equation, denoted $s \simeq t$, is a pair of HOL terms of the same type,
where $\simeq$ is assumed to be symmetric (i.e., $s \simeq t$ represents
both $s \simeq t$ and $t \simeq s$).}
A literal $\ell$ is a signed equation, written $[s \simeq t]^\alpha$,
where $\alpha \in \{\pospol,\negpol\}$ is the polarity of $\ell$.
Literals of form $[s_o]^\alpha$ are shorthand for $[s_o \simeq \syntrue]^\alpha$, and
negative literals $[s \simeq t]^\negpol$ are also referred to as unification constraints.
A negative literal $\ell$ is called a {flex-flex} unification constraint
if $\ell$ is of the
form $[X \; \overline{s^i} \simeq Y \; \overline{t^j}]^\negpol$,
where $X,Y$ are variables.
A clause $\clause{C}$ is a multiset of literals, denoting its disjunction.
For brevity, if $\clause{C},\clause{D}$ are 
clauses and $\ell$ is a literal, $\clause{C} \lor \ell$
and $\clause{C} \lor \clause{D}$ denote the multi-union 
$\clause{C} \cup \{ \ell \}$ and $\clause{C} \cup \clause{D}$, respectively.
$s|_\pi$ is the subterm of $s$ at position $\pi$, and $s[r]_\pi$ denotes the term
that is obtained by replacing the subterm of $s$ at position $\pi$
by $r$; \hl{$\alpha$-conversion is applied implicitly whenever necessary to avoid variable
capture.}

The \calculus\ calculus can be divided into four groups of inference rules:
\begin{description}
  \item[\emph{Clause normalization.}]
    The clausification rules of \calculus\ are mostly standard, cf. Fig.~\ref{fig:calculus:add}. Every
    non-normal clause is transformed into an equisatisfiable set of clauses in CNF.
    Multiple conclusions are written one below the other. 
    Note that the clausification rules are proper inference rules rather than a dedicated
    meta operation. This is due to the fact that non-CNF clauses may be generated from
    the application of the remaining inferences rules, hence renormalization during
    proof search may be necessary. 
    In the following we use \cnfapp\ to refer to the entirety of the CNF rules.
    
    For the elimination of existential quantifiers, see rule (CNFExists) in
    Fig.~\ref{fig:calculus:add},
    the sound Skolemization technique
    of Miller~\cite{miller1983proofs,DBLP:journals/logcom/Miller91} is assumed.
  \item[\emph{Primary inferences.}]
    The primary inference rules of EP are {paramodulation} (Para),
    {equality factoring} (EqFac) and {primitive substitution} (PS),
    cf. Fig.~\ref{fig:calculus:add}.

    \begin{figure}[tb]
      \centering
      \fbox{
        \begin{minipage}{.95\textwidth}
          \fbox{\textsc{Clausification rules}} \\[1em]
          \begin{minipage}{.45\textwidth}
          \begin{prooftree}
            \AxiomC{$\clause{C} \lor [(l_\tau = r_\tau) \simeq \syntrue]^\alpha$}
            \RightLabel{\scriptsize(LiftEq)}
            \UnaryInfC{$\clause{C} \lor [l_\tau \simeq r_\tau]^\alpha$}
          \end{prooftree}
          \end{minipage}
          \begin{minipage}{.45\textwidth}
          \begin{prooftree}
            \AxiomC{$\clause{C} \lor [\neg s_o]^\alpha$}
            \RightLabel{\scriptsize(CNFNeg)}
            \UnaryInfC{$\clause{C} \lor [s_o]^{\overline{\alpha}}$}
          \end{prooftree}
          \end{minipage} \\[.5em]
          \begin{minipage}{.45\textwidth}
          \begin{prooftree}
            \AxiomC{$\clause{C} \lor [s_o \lor t_o]^\pospol$}
            \RightLabel{\scriptsize(CNFDisj)}
            \UnaryInfC{$\clause{C} \lor [s_o]^\pospol \lor [t_o]^\pospol$}
          \end{prooftree}
          \end{minipage}
          \begin{minipage}{.45\textwidth}
          \begin{prooftree}
            \AxiomC{$\clause{C} \lor [s_o \lor t_o]^\negpol$}
            \RightLabel{\scriptsize(CNFConj)}
            \UnaryInfC{$\clause{C} \lor [s_o]^\negpol$}
            \noLine
            \def\extraVskip{0em}
            \UnaryInfC{$\clause{C} \lor [t_o]^\negpol$}
          \end{prooftree}
          \end{minipage}\\[.1em]
          \begin{minipage}{.45\textwidth}
          \begin{prooftree}
            \AxiomC{$\clause{C} \lor [\forall X_\tau.\,s_o]^\pospol$}
            \RightLabel{\scriptsize(CNFAll)$^\dagger$}
            \UnaryInfC{\hl{$\clause{C} \lor [s_o\{Z/X\}]^\pospol$}}
          \end{prooftree}
          \end{minipage}
          \begin{minipage}{.45\textwidth}
          \begin{prooftree}
            \AxiomC{$\clause{C} \lor [\forall X_\tau.\,s_o]^\negpol$}
            \RightLabel{\scriptsize(CNFExists)$^\ddagger$}
            \UnaryInfC{\hl{$\clause{C} \lor [s_o\{\mathrm{sk} \; \overline{\fv(\clause{C})}/X\}]^\negpol$}}
          \end{prooftree}
          \end{minipage}
          \\[1em]
          \footnotesize
          \mbox{} 
          \hfill
          $\dagger$: where $Z_\tau$ is fresh for $\clause{C}$
          \qquad
          $\ddagger$: where $\mathrm{sk}$ is a new constant of appropriate type
          \\[-.5em]
          \hrule
          \vspace{.5em}
          \fbox{\textsc{Primary inference rules}}
          \begin{prooftree}
            \AxiomC{$\clause{C} \lor [s_\tau \simeq t_\tau]^\alpha$}
            \AxiomC{$\clause{D} \lor [l_\nu \simeq r_\nu]^\pospol$}
            \RightLabel{\scriptsize\paramodrule$^{\rotatebox[origin=c]{180}{$\dagger$}}$}
            \BinaryInfC{$[s[r]_\pi \simeq t]^\alpha \lor \clause{C} \lor \clause{D} \lor [s|_\pi \simeq l]^\negpol$}
          \end{prooftree}
          \begin{prooftree}
            \AxiomC{$\clause{C} \lor [s_\tau \simeq t_\tau]^\alpha \lor [u_\tau \simeq v_\tau]^\alpha$}
            \RightLabel{\scriptsize\factorrule}
            \UnaryInfC{$\clause{C} \lor [s_\tau \simeq t_\tau]^\alpha \lor [s_\tau \simeq u_\tau]^\negpol \lor [t_\tau \simeq v_\tau]^\negpol$}
          \end{prooftree}
          \begin{prooftree}
            \AxiomC{$\clause{C} \lor [H_{\tau} \; \overline{s^i_{\tau^i}}]^\alpha$}
            \AxiomC{$G \in \mathcal{GB}^{\{\neg, \lor\} \cup \{\Pi^\nu,\, {=}^\nu \, \mid \, \nu \in \types\}}_\tau$}
            \RightLabel{\scriptsize\primsubstrule}
            \BinaryInfC{$\clause{C} \lor [H_{\tau} \; \overline{s^i_{\tau^i}}]^\alpha \lor [H \simeq G]^\negpol$}
          \end{prooftree}
          \footnotesize
          \mbox{} 
          \hfill
          \rotatebox[origin=c]{180}{$\dagger$}: if $s|_\pi$ is of type $\nu$ and $\fv(s|_\pi) \subseteq \fv(s)$
        \end{minipage}
      }
      \caption{Extensionality and unification rules of \calculus.\label{fig:calculus:add}}    
    \end{figure}
    
    The paramodulation rule (Para) replaces subterms of literals within clauses
    by (potentially) equal terms given from a positive equality literal. Since the latter
    clause might not be a unit clause, the rewriting step can be considered {conditional}
    where the remaining literals represent the respective additional conditions.
    Factorization \factorrule\ contracts two literals that are semantically overlapping (i.e., 
    one is more general than the other) but not
    syntactically equal. This reduces the size of the clause, given that the unification
    of the respective two literals is successful.
    These two rules introduce {unification constraints} that are encoded as negative
    equality literals:
    A generating inference is semantically justified
    if the unification constraint(s) can be solved. Since higher-order unification is not decidable,
    these constraints are explicitly encoded into the result clause for subsequent analysis.
    Note that a resolution inference between clauses $\clause{C} \equiv \clause{C}^\prime \lor [p]^\pospol$
    and $\clause{D} \equiv \clause{D}^\prime \lor [p]^\negpol$ 
    can be simulated by the (Para) rule as the literals $[p]^\alpha$ are actually shorthands
    for $[p \simeq \syntrue]^\alpha$ and the clause $[\syntrue \simeq \syntrue]^\negpol
    \lor \clause{C}^\prime \lor \clause{D}^\prime \lor [p \simeq p]^\negpol$, which eventually simplifies
    to $\clause{C}^\prime \lor \clause{D}^\prime$, is generated.

    Moreover, note that both (Para) and (EqFac) are unordered and produce numerous redundant
    clauses. In practice, Leo-III tries to remedy this situation by using
    heuristics to restrict the number of generated clauses,
    including a higher-order term ordering, cf.~\S\ref{sec:impl}. Such heuristics, e.g.,
    prevent redundant paramodulation inferences between positive propositional literals 
    in clauses $\clause{C} \lor [p]^\pospol$ and $\clause{D} \lor [q]^\pospol$.
    % However,
    %a theoretically grounded and robust notion of redundancy as known from first-order superposition
    %is still missing.

    The primitive substitution inference \primsubstrule\ approximates the logical structure of literals
    with flexible heads. In contrast to early attempts that blindly guessed a concrete
    instance of head variables, \primsubstrule\ uses so-called general bindings,
    denoted $\mathcal{GB}_\tau^{t}$~\cite[\S2]{DBLP:journals/jar/BenzmullerSPT15},
    to step-wise approximate the instantiated term structure and hence limit the 
    explosive growth of primitive substitution. Nevertheless, \primsubstrule\ still
    enumerates the whole universe of terms but, in practical applications,
    often few applications of the rule are sufficient to find a refutation.
    An example where
    primitive substitution is necessary is the following: Consider a clause $\clause{C}$
    given by $\clause{C} \equiv [P_o]^\pospol$ where $P$ is a Boolean variable. This clause
    corresponds to the formula $\forall P_o.\, P$ which is clearly not a theorem.
    Neither \paramodrule\ nor \factorrule\ or any other calculus rules presented so far or further below
    (except for \primsubstrule) allow to construct a derivation to the empty clause. However,
    using \primsubstrule, there exists a derivation
    $[P]^\pospol \vdash_{\text{\primsubstrule},\text{\bindrule}} [\neg P^\prime]^\pospol \vdash_{\text{CNFNeg}} [P^\prime]^\negpol$. 
    Note that $\{\neg P^\prime/P\}$ is a substitution applying a
    general binding from $\mathcal{GB}_o^{\{\neg, \lor\}\cup \{\Pi^\tau, =^\tau \, \mid \, \tau \in \types\}}$ that approximates logical negation. Now, a simple refutation
    involving $[P]^\pospol$ and $[P^\prime]^\negpol$ can be found.

  \item[\emph{Extensionality rules.}]
    The rules (NBE) and (PBE) -- for negative resp. positive Boolean extensionality --
    as well as (NFE) and (PFE) -- for negative resp. positive functional extensionality --
    are the extensionality rules of \calculus, cf. Fig.~\ref{fig:calculus}.
    
    While the functional extensionality rules gradually ground the literals to base types
    and provide witnesses for the (in-)equality of function symbols to the search space,
    the Boolean extensionality rules enable the application of clausification
    rules to the Boolean-typed sides of the literal, thereby lifting them into
    semantical proof search.
    These rules eliminate the need for explicit
    extensionality axioms in the search space, which would
    enable cut-simulation~\cite{DBLP:journals/corr/abs-0902-0043} and hence drastically hamper proof search.
  \item[\emph{Unification.}]
    The unification rules of \calculus\ are a variant of
    Huet's unification rules and presented in Fig.~\ref{fig:calculus}.
    They can be eagerly applied to the unification constraints in clauses. In an extensional
    setting, syntactical search for unifiers and
    semantical proof search coincide, and unification transformations are regarded proper
    calculus rules.
    As a result, the unification rules might only partly solve
    (i.e., simplify) unification constraints and unification constraints themselves are
    eligible to subsequent inferences. The bundled unification rules are referred to
    as \uniapp.
\end{description}

\begin{figure}[tb]
      \centering
      \fbox{
        \begin{minipage}{.95\textwidth}
          \fbox{\textsc{Extensionality rules}} \\[1em]
          \begin{minipage}{.49\textwidth}
          \begin{prooftree}
            \AxiomC{$\clause{C} \lor [s_o \simeq t_o]^\pospol$}
            \RightLabel{\scriptsize\boolposrule}
            \UnaryInfC{$\clause{C} \lor [s_o]^\pospol \lor [t_o]^\negpol$}
            \noLine
            \def\extraVskip{0em}
            \UnaryInfC{$\clause{C} \lor [s_o]^\negpol \lor [t_o]^\pospol$}
          \end{prooftree}
          \end{minipage}
          \begin{minipage}{.49\textwidth}
          \begin{prooftree}
            \AxiomC{$\clause{C} \lor [s_o \simeq t_o]^\negpol$}
            \RightLabel{\scriptsize\boolnegrule}
            \UnaryInfC{$\clause{C} \lor [s_o]^\pospol \lor [t_o]^\pospol$}
            \noLine
            \def\extraVskip{0em}
            \UnaryInfC{$\clause{C} \lor [s_o]^\negpol \lor [t_o]^\negpol$}
          \end{prooftree}
          \end{minipage} \\[.5em]
          \begin{minipage}{.49\textwidth}
          \begin{prooftree}
            \AxiomC{$\clause{C} \lor [s_{\nu\tau} \simeq t_{\nu\tau}]^\pospol$}
            \RightLabel{\scriptsize\funcposrule$^\dagger$}
            \UnaryInfC{$\clause{C} \lor [s \; X_\tau \simeq t \; X_\tau]^\pospol$}
          \end{prooftree}
          \end{minipage}
          \begin{minipage}{.49\textwidth}
          \begin{prooftree}
            \AxiomC{$\clause{C} \lor [s_{\nu\tau} \simeq t_{\nu\tau}]^\negpol$}
            \RightLabel{\scriptsize\funcnegrule$^\ddagger$}
            \UnaryInfC{$\clause{C} \lor [s \; \mathrm{sk}_\tau \simeq t \; \mathrm{sk}_\tau]^\negpol$}
          \end{prooftree}
          \end{minipage}
          \\[1em]
          \footnotesize
          \mbox{} 
          \hfill
          $\dagger$: where $X_\tau$ is fresh for $\clause{C}$
          \qquad
          $\ddagger$: where $\mathrm{sk}_\tau$ is a new Skolem term for 
          $\clause{C} \lor [s_{\nu\tau} \simeq t_{\nu\tau}]^\negpol$
          \\[-.5em]
          \hrule
          \vspace{.5em}
          \fbox{\textsc{Unification rules}}  \\[1em]
          \begin{minipage}{.49\textwidth}
          \begin{prooftree}
            \AxiomC{$\clause{C} \lor [s_\tau \simeq s_\tau]^\negpol$}
            \RightLabel{\scriptsize\trivrule}
            \UnaryInfC{$\clause{C}$}
          \end{prooftree}
          \end{minipage}
          \begin{minipage}{.49\textwidth}
          \begin{prooftree}
            \AxiomC{$\clause{C} \lor [X_\tau \simeq s_\tau]^\negpol$}
            \RightLabel{\scriptsize\bindrule$^\dagger$}
            \UnaryInfC{$\clause{C}\{s / X\}$}
          \end{prooftree}
          \end{minipage}\\[.5em]
          \begin{minipage}{\textwidth}
          \begin{prooftree}
               \AxiomC{$\clause{C} \lor 
                  [c \; \overline{s^i} \simeq c \; \overline{t^i}]^\negpol$}
               \RightLabel{\scriptsize\decomprule}
               \UnaryInfC{$\clause{C} \lor [s^1 \simeq t^1]^\negpol \lor \ldots \lor [s^n \simeq t^n]^\negpol$}
          \end{prooftree}
          \end{minipage}\\[.5em]
          \begin{minipage}{\textwidth}
          \begin{prooftree}
            \AxiomC{$\clause{C} \lor [X_{\nu\overline{\mu}} \; \overline{\,s^i\,} \simeq c_{\nu\overline{\tau}} \; \overline{\,t^j\,}]^\negpol$}
            \AxiomC{$g_{\nu\overline{\mu}} \in \mathcal{GB}^{\{c\}}_{\nu\overline{\mu}}$}
            \RightLabel{\scriptsize\flexrigidrule}
            \BinaryInfC{$\clause{C} \lor [X_{\nu\overline{\mu}} \; \overline{\,s^i\,} \simeq c_{\nu\overline{\tau}} \; \overline{\,t^j\,}]^\negpol \lor [X \simeq g]^\negpol$}
          \end{prooftree}
          \end{minipage}\\[.5em]
          \begin{minipage}{\textwidth}
          \begin{prooftree}
            \AxiomC{$\clause{C} \lor [X_{\nu\overline{\mu}} \; \overline{\,s^i\,} \simeq Y_{\nu\overline{\tau}} \; \overline{\,t^j\,}]^\negpol$}
            \AxiomC{$g_{\nu\overline{\mu}} \in \mathcal{GB}^{\{h\}}_{\nu\overline{\mu}}$}
            \RightLabel{\scriptsize\flexflexrule$^\ddagger$}
            \BinaryInfC{$\clause{C} \lor [X_{\nu\overline{\mu}} \; \overline{\,s^i\,} \simeq Y_{\nu\overline{\tau}} \; \overline{\,t^j\,}]^\negpol \lor [X \simeq g]^\negpol$}
          \end{prooftree}
          \end{minipage}
          \\[1em]
          \footnotesize
          \mbox{} 
          \hfill
          $\dagger$: where $X_\tau \notin \fv(s)$
          \qquad
          \hl{$\ddagger$: where $h \in \Sigma$ is a constant}
        \end{minipage}
      }
      \caption{Primary inference rules and extensionality rules 
      of \calculus. \label{fig:calculus}}
    \end{figure}

\noindent A set $\Phi$ of sentences has a {refutation in \calculus}, denoted
$\Phi \vdash \square$, iff the empty clause can be derived in \calculus.
A clause is the {empty clause}, written $\emptyclause$,
if it contains no or only
flex-flex unification constraints.
This is motivated
by the fact that flex-flex unification problems can always be solved,
and hence any clause consisting of only
flex-flex constraints is necessarily unsatisfiable~\cite{DBLP:journals/jsyml/Henkin50}.

%The above sketched \calculus\ calculus is sound and refutationally complete for HOL with Henkin semantics.
\begin{theorem}[Soundness and Completeness of $\calculus$]
%For every set of HOL sentences $\Phi$ it holds that
%$\models \Phi$ if and only if $\{ [\phi]^\pospol \mid \phi \in \Phi \} \vdash \emptyclause$.
EP is sound and refutationally complete for HOL with Henkin semantics.
\end{theorem}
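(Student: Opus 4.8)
The plan is to prove the two halves separately: soundness by a routine rule-by-rule verification, and refutational completeness by the abstract-consistency and model-existence machinery for Henkin semantics. For soundness I would show that every inference rule preserves satisfiability, so that whenever a Henkin model $\model$ (under a variable assignment) satisfies the premise clause(s) it satisfies one of the conclusions. The clausification rules are standard; the only delicate point is that the Skolemizing rules (CNFExists) and \funcnegrule\ produce merely equisatisfiable clause sets, which is justified by Miller's sound Skolemization. The rules \primsubstrule, \flexrigidrule, and \flexflexrule\ only append a fresh negative literal, so their conclusion is a logical weakening of the premise and is trivially entailed, while \trivrule, \decomprule, and \bindrule\ preserve satisfiability by the standard soundness of unification transformations. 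For \paramodrule\ the key is a case split on the introduced unification constraint $[s|_\pi \simeq l]^\negpol$: if it is satisfied the conclusion holds immediately, and otherwise $\model$ interprets $s|_\pi$ and $l$ identically, so by congruence $s[r]_\pi$ and $s$ denote the same object and $[s[r]_\pi \simeq t]^\alpha$ inherits the truth value of the premise literal; \factorrule\ is analogous, and \boolposrule, \boolnegrule, \funcposrule, \funcnegrule\ are sound because Henkin models validate $\mathrm{EXT}^o$ and $\mathrm{EXT}^{\nu\tau}$. Since the empty clause $\emptyclause$ is unsatisfiable (a clause containing only flex-flex constraints is unsatisfiable, as recalled above via Henkin's result), a derivation $\Phi \vdash \emptyclause$ forces $\Phi$ to be unsatisfiable.

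For completeness I would follow the abstract-consistency approach used by Benzm\"uller for the underlying resolution calculus, adapting it to paramodulation and primitive equality. Assume $\Phi$ is Henkin-unsatisfiable and, towards a contradiction, that $\Phi \not\vdash \emptyclause$. Define the class $\Gamma$ of all clause sets that are not refutable in $\calculus$, and verify that $\Gamma$ is a saturated abstract consistency class in the sense of Benzm\"uller--Brown--Kohlhase: it satisfies the Hintikka-style closure conditions for the connectives and quantifiers, the two extensionality conditions, and the saturation condition for primitive equality. Each condition is discharged by contraposition: if a set in $\Gamma$ violated, say, the disjunction condition, then the corresponding CNF rule together with the refutations of the two extensions would yield a refutation of the set itself; the equality/congruence condition is obtained from \paramodrule\ together with reflexivity, and the functional- and Boolean-extensionality conditions from \funcposrule/\funcnegrule\ and \boolposrule/\boolnegrule, respectively. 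Applying the Henkin model-existence theorem to $\Phi \in \Gamma$ then yields a Henkin model of $\Phi$, contradicting unsatisfiability; hence $\Phi \vdash \emptyclause$.

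The main obstacle is the verification of the saturation condition, which requires a lifting argument connecting ground refutations to the non-ground calculus that carries metavariables and general bindings. Concretely, I expect the crux to be showing that the unification rules \uniapp\ (the Huet-style rules driven by the general bindings $\mathcal{GB}$) are complete, i.e. that they enumerate in the limit a sufficient set of unifiers so that every ground instance of a generating inference can be reconstructed at the general level. Establishing this, together with the precise interaction between primitive equality, \paramodrule, and the extensionality rules inside the model-existence construction, is where the real work lies; the remaining closure conditions are comparatively mechanical.
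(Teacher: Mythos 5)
Your soundness half is unproblematic and matches the paper, which simply cites \cite[Theorem 3.8]{steen2018} for exactly this kind of rule-by-rule satisfiability-preservation argument (with Miller's Skolemization covering (CNFExists) and \funcnegrule). The completeness half uses the right general machinery---abstract consistency plus model existence---but the concrete route you picked hides a genuine gap. You propose to show that the class $\Gamma$ of non-refutable sets is a \emph{saturated} abstract consistency class in the Benzm\"uller--Brown--Kohlhase sense and then invoke their Henkin model existence theorem. The saturation condition there demands that for every sentence $A$ and every $\Phi \in \Gamma$, either $\Phi \cup \{A\} \in \Gamma$ or $\Phi \cup \{\neg A\} \in \Gamma$. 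For a machine-oriented calculus such as $\calculus$, establishing this is essentially equivalent to proving cut admissibility; this is precisely the cut-simulation problem discussed in the paper itself~\cite{DBLP:journals/corr/abs-0902-0043}. In particular, it is \emph{not} the lifting/unification-completeness issue you identify as the crux: lifting arguments are indeed needed to verify the ordinary closure conditions of the consistency class at the non-ground level, but they do not touch saturation, and no enumeration-of-unifiers argument will discharge it.

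The paper's proof avoids this obstacle by a different decomposition: by \cite[Lemma 3.24]{steen2018} the non-refutable sets form a plain (non-saturated) abstract consistency class; by \cite[Lemma 3.17]{steen2018} such a class can be extended to a Hintikka set; and the existence of a Henkin model satisfying that Hintikka set is then supplied by a separate, dedicated result, \cite[Theorem 5]{reductionHintikka2020}. The point of that last theorem is exactly to obtain Henkin models from Hintikka sets without ever requiring saturation of the underlying consistency class. So to repair your plan you would need either to prove saturation for $\calculus$ (in effect, cut-elimination for the calculus---a substantial standalone result) or to reroute the final step through the Hintikka-set model-existence theorem, as the paper does.
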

\begin{proof}
Soundness is guaranteed by~\cite[Theorem 3.8]{steen2018}.
For completeness, the following argument is used 
(cf.~\cite[\S 3]{steen2018} for the detailed
definitions and notions):
By~\cite[Lemma 3.24]{steen2018} the set of sentences that cannot be refuted in \calculus\
is an abstract consistency class~\cite[Def. 3.11]{steen2018}, and by~\cite[Lemma 3.17]{steen2018}
this abstract consistency class can be
extended to a Hintikka set.
The existence of a Henkin model that satisfies this Hintikka set is guaranteed 
by~\cite[Theorem 5]{reductionHintikka2020}.
\qed
\end{proof}

An example for a refutation in EP is given in the following:

\begin{example}[Cantor's Theorem]
Cantor's Theorem states that, given a set $A$, the power set of $A$
has a strictly greater cardinality than $A$ itself.
The core argument of the proof can be formalized as follows:
\begin{equation*}
  \neg \exists f_{o\iota\iota}.\, \forall Y_{o\iota}.\,\exists X_\iota.\, f\;X = Y
  \tag{$\mathbf{C}$}
  \label{eq:cantor:surjective}
\end{equation*}
Formula \ref{eq:cantor:surjective} states that there exists no surjective function $f$ from a set
to its power set.        
A proof of \ref{eq:cantor:surjective} in EP makes use of
functional extensionality, Boolean extensionality, primitive substitution
as well as nontrivial higher-order pre-unification; it is given below.

By convention,
the application of a calculus rule (or of a compound rule) is stated with the respective
premise clauses enclosed in parentheses after the rule name. For rule \primsubstrule,
the second argument describes which general binding was used for the instantiation; e.g.,
$\primsubstapp(\clause{C}, \approxbinding{\tau}{t})$ denotes an instantiation
with an approximation of term $t$ for goal type $\tau$.\footnote{\hl{The set $\approxbinding{\tau}{t}$ of
  {approximating/partial bindings} parametric to a type
  $\tau = {\beta{\alpha^n\ldots\alpha^1}}$ (for $n\geq 0$) and to a
  constant $t$ of type ${\beta{\gamma^m\ldots\gamma^1}}$ (for $m\geq 0$)
  is defined as follows (for further details see also~\cite{SnGa89}): Given
  a ``name'' $k$ (where a {name} is either a constant or a variable)
  of type ${\beta{\gamma^m\ldots\gamma^1}}$, the term ${l}$ having form
  $\lambda {X^1_{\alpha^{1}}. \ldots \lambda X^n_{\alpha^{n}}}. (k\,
  {{r^1}\ldots {r^m}})$ is a
  {partial binding} of type ${\beta{\alpha^n\ldots \alpha^1}}$ and {head}
  $k$. Each ${r}^{i \leq m}$ has form $H^i {X^1_{\alpha^1}\ldots X^n_{\alpha^n}}$
  where $H^{i\leq m}$ are fresh variables typed
  $\gamma^{i\leq m}{\alpha^n\ldots \alpha^1}$.  {Projection bindings}
  are partial bindings whose head $k$ is one of $X^{i\leq l}$.
  {Imitation bindings} are partial bindings whose head $k$ is
  identical to the given constant symbol $t$ in the superscript of
  $\approxbinding{\tau}{t}$.
  $\approxbinding{\tau}{t}$ is the set of all
  projection and imitation bindings modulo type
  $\tau$ and constant $t$. In our example derivation we twice use 
  imitation bindings of form $\lambda X_\iota. \neg(H_{o\iota} X_\iota)$ from $\approxbinding{o\iota}{\neg}$.}}
        {\arraycolsep=2.5pt
        \begin{equation*}
        \begin{array}{ll}
          \cnfapp(\neg\text{\ref{eq:cantor:surjective}}) \colon &
            \clause{C}_1 \colon [\sk^1 \; (\sk^2 \; \FV^1)\simeq \FV^1]^\pospol \\
          \funcposapp(\clause{C}_1) \colon &
            \clause{C}_2 \colon [\sk^1 \; (\sk^2 \; \FV^1) \; \FV^2 \simeq \FV^1 \; \FV^2]^\pospol \\
          \boolposapp(\clause{C}_2) \colon &
            \clause{C}_3 \colon [\sk^1 \; (\sk^2 \; \FV^1) \; \FV^2]^\pospol \lor
                                [\FV^1 \; \FV^2]^\negpol \\
            & \clause{C}_4 \colon [\sk^1 \; (\sk^2 \; \FV^3) \; \FV^4]^\negpol \lor
                                  [\FV^3 \; \FV^4]^\pospol  \\
          \primsubstapp(\clause{C}_3, \approxbinding{o\iota}{\neg}), \cnfapp \colon &
             \clause{C}_5 \colon [\sk^1 \; \big(\sk^2 \; (\lambda Z_\iota.\, \neg (\FV^5 \; Z))\big) \; \FV^2]^\pospol \lor
                                [\FV^5 \; \FV^2]^\pospol\\
          \primsubstapp(\clause{C}_4, \approxbinding{o\iota}{\neg}), \cnfapp \colon &
             \clause{C}_6 \colon [\sk^1 \; \big(\sk^2 \; (\lambda Z_\iota.\, \neg (\FV^6 \; Z))\big) \; \FV^4]^\negpol \lor
                                  [\FV^6 \; \FV^4]^\negpol  \\
          \factorapp(\clause{C}_5), \uniapp \colon &
            \clause{C}_7 \colon [\sk^1 \;
                                  \big(\sk^2 \; \lambda Z_\iota.\, \neg (\sk^1 \; Z \; Z)\big) \;
                                  \big(\sk^2 \; \lambda Z_\iota.\, \neg (\sk^1 \; Z \; Z)\big)]^\pospol \\
          \factorapp(\clause{C}_6), \uniapp \colon &
            \clause{C}_8 \colon [\sk^1 \;
                                  \big(\sk^2 \; \lambda Z_\iota.\, \neg (\sk^1 \; Z \; Z)\big) \;
                                  \big(\sk^2 \; \lambda Z_\iota.\, \neg (\sk^1 \; Z \; Z)\big)]^\negpol \\
          \paramodapp(\clause{C}_7, \clause{C}_8), \uniapp \colon &
            \emptyclause
        \end{array}
        \end{equation*}
        }\\
The Skolem symbols $\sk^1$ and $\sk^2$ used in the above proof have type
$o\iota\iota$ and $\iota(o\iota)$, respectively and the $\FV^i$ denote fresh free variables
of appropriate type.
A unifier $\sigma_{\clause{C}_7}$ generated by \uniapp\ for producing $\clause{C}_7$ is
given by (analogously for $\clause{C}_8$):
\begin{equation*}
  \sigma_{\clause{C}_7} \equiv \Big\{
    \sk^2 \; \big(\lambda Z_\iota.\, \neg (\sk^1 \; Z \; Z)\big)/\FV^2,
    \big(\lambda Z_\iota.\, \sk^1 \; Z \; Z\big)/\FV^5
  \Big\}
\end{equation*}
Note that, together with the substitution
$\sigma_{\clause{C}_3} \equiv \big\{\lambda Z_\iota.\, \neg (\FV^5 \; Z) / \FV^1\big\}$
generated by approximating
$\neg_{oo}$ via \primsubstrule\ on $\clause{C}_3$, the free variable $\FV^1$
in $\clause{C}_1$ is instantiated
by $\sigma_{\clause{C}_7} \circ \sigma_{\clause{C}_3}(\FV^1) \equiv
\lambda Z_\iota.\, \neg (\sk^1 \; Z \; Z)$.
Intuitively, this instantiation encodes the {diagonal set of $\sk^1$},
given by $\{x \mid x \notin \sk^1(x) \}$, as used in the traditional proofs of Cantor's Theorem;
see, e.g., Andrews~\cite{andrews1984automating}.

The TSTP representation of Leo-III's proof for this problem is presented in Fig.~\ref{fig:output}.
\end{example}

\subsection{Extended Calculus}\label{ssec:extcalculus}
As indicated further above, Leo-III targets automation of HOL with Henkin semantics and choice.
The calculus EP from \S\ref{ssec:ep}, however, does not address choice. To this end,
Leo-III implements several additional calculus rules that either accommodate further reasoning
capabilities (e.g.\, reasoning with choice) or address the inevitable explosive growth of the search
space during proof search (e.g.\, simplification routines).
The latter kind of rules are practically motivated, partly heuristic, and hence primarily
target technical issues that complicate effective automation in practice. Note that no completeness claims
are made for the extended calculus with respect to HOL with choice. Furthermore, some of the below
features of Leo-III are unsound with respect to HOL without choice. Since, however, Leo-III is designed
as a prover for HOL with choice, this is not a concern here.

The additional rules address the follows aspects \hl{(see an earlier
paper~\cite{DBLP:conf/cade/SteenB18} and Steen's doctoral thesis~\cite{steen2018}
for details and examples)}:
%In particular, Leo-III uses specific rules for clause contraction (simplification), 
%choice reasoning, defined equalities, function synthesis and higher-order unification.
%These rules are discussed in the following.

\begin{description}
  \item[\emph{Improved clausification.}]
    Leo-III employs {definitional clausification}~\cite{DBLP:conf/cade/WisniewskiSKB16}
    to reduce the number of clauses created during clause normalization.
    Moreover, {miniscoping} is employed prior to clausification.
  \item[\emph{Clause contraction.}]
  Leo-III implements equational {simplification procedures}, including {subsumption},
  destructive {equality resolution}, heuristic {rewriting} and contextual {unit cutting}
  (simplify-reflect)~\cite{Schulz:2002:EBT:1218615.1218621}.
  \item[\emph{Defined equalities.}]
  Common notions of defined equality predicates (Leibniz equality, Andrews equality) are heuristically
  replaced with {primitive equality} predicates.
  \item[\emph{Choice.}]
  Leo-III implements additional calculus rules for reasoning with choice.
  \item[\emph{Function synthesis.}]
  If plain unification fails for a set of unification constraints,
  Leo-III may try to synthesize functions that meet the specifications represented
  by the unification constraint. This is done using
  special choice instances that simulate if-then-else terms which explicitly enumerate
  the desired input output relation of that function.
  In general, this rule tremendously increases the search
  space, but it also enables Leo-III to solve some hard problems with TPTP rating 1.0
  that were not solved by any ATP system before.
  \item[\emph{Injective functions.}]
  Leo-III addresses improved reasoning with injective functions by postulating
  the existence of left inverses for function symbols that are inferred to be
  injective, see also below.
  \item[\emph{Further rules.}]
  Prior to clause normalization, Leo-III might {instantiate universally quantified}
  variables with heuristically chosen terms. 
  This includes the exhaustive instantiation of finite types (such as $o$, $oo$, etc.)
  as well as partial instantiation for other interesting types (such as $o\tau$ for some
  type $\tau$).
  \end{description}
  The addition of the above calculus rules to EP in Leo-III enables the system to solve
  various problems that can otherwise not be solved (in reasonable resource limits).
  An example problem that could not be solved by any higher-order ATP system before
  is the following, cf.~\cite[Problem \textbf{X5309}]{DBLP:conf/cade/AndrewsBB00}:

\begin{example}[Cantor's Theorem, revisited]
  Another possibility to encode Cantor's theorem is by using a formulation based on injectivity:
  \begin{equation}
    \neg \big(\exists f_{\iota(o\iota)}.\, \forall X_{o\iota}.\,\forall Y_{o\iota}.\, (f\;X = f\;Y) \Rightarrow X = Y\big)
    \tag{$\mathbf{C^\prime}$}
    \label{eq:cantor:injective}
  \end{equation}
  Here, the nonexistence of an injective function from a set's power set to the original
  set is postulated. This conjecture can easily be proved using Leo-III's injectivity rule $(\mathrm{INJ})$
  that, given a fact stating that some function symbol $f$ is injective, introduces the left inverse
  of $f$, say $f^{\mathrm{inv}}$, as fresh function symbol to the search space.
  The advantage is that $f^{\mathrm{inv}}$ is then available to subsequent inferences and can 
  act as an explicit evidence for the (assumed) existence of such a function which is then refuted.
  The full proof of \ref{eq:cantor:injective} is as follows:\\

  \noindent\resizebox{\textwidth}{!}{$\arraycolsep=2.5pt
        \begin{array}{ll}
          \cnfapp(\neg\text{\ref{eq:cantor:injective}}) \colon &
            \clause{C}_0 \colon [\sk \; \FV^1 \simeq \sk \; \FV^2]^\negpol \lor [\FV^1 \simeq \FV^2]^\pospol \\
          \funcposapp(\clause{C}_0) \colon &
            \clause{C}_1 \colon [\sk \; \FV^1 \simeq \sk \; \FV^2]^\negpol \lor [\FV^1 \; \FV^3 \simeq \FV^2 \; \FV^3]^\pospol \\
          \mathrm{INJ}(\clause{C}_0) \colon &
            \clause{C}_2 \colon [\sk^{\mathrm{inv}} \; (\sk \; \FV^4) \simeq \FV^4]^\pospol  \\
          \funcposapp(\clause{C}_2) \colon &
            \clause{C}_3 \colon [\sk^{\mathrm{inv}} \; (\sk \; \FV^4) \; \FV^5 \simeq \FV^4 \; \FV^5]^\pospol \\
          \paramodapp(\clause{C}_3,\clause{C}_1) \colon &
            \clause{C}_4 \colon [\sk \; \FV^1 \simeq \sk \; \FV^2]^\negpol \lor
                                [\FV^1 \; \FV^3 \simeq \FV^4 \; \FV^5]^\pospol \lor\\
                                &
                                \hspace{1.7em} [\sk^{\mathrm{inv}} \; (\sk \; \FV^4) \; \FV^5 \simeq \FV^2 \; \FV^3]^\negpol \\
          \uniapp(\clause{C}_4) \colon &
            \clause{C}_5 \colon [\sk^{\mathrm{inv}} \; \big(\sk \; (\FV^7 \; \FV^3)\big) \; (\FV^6 \; \FV^3)
                                    \simeq \FV^7 \; \FV^3 \; (\FV^6 \;  \FV^3)]^\pospol \\
          \boolposapp(\clause{C}_3) \colon &
             \clause{C}_6 \colon [\sk^{\mathrm{inv}} \; (\sk \; \FV^4) \; \FV^5]^\negpol \lor
                                  [\FV^4 \; \FV^5]^\pospol  \\
          \primsubstapp(\clause{C}_6, \approxbinding{o\iota}{\neg}), \cnfapp \colon &
             \clause{C}_7 \colon [\sk^{\mathrm{inv}} \; (\sk \; (\lambda Z_\iota.\, \neg (\FV^6 \; Z))) \; \FV^5]^\negpol \lor
                                  [\FV^6 \; \FV^5]^\negpol\\
          \factorapp(\clause{C}_7),\uniapp, \cnfapp \colon &
            \clause{C}_8 \colon [\sk^{\mathrm{inv}} \;
                                  \big(\sk \; \lambda Z_\iota.\, \neg (\sk^{\mathrm{inv}} \; Z \; Z)\big) \;
                                  \big(\sk \; \lambda Z_\iota.\, \neg (\sk^{\mathrm{inv}} \; Z \; Z)\big)]^\negpol \\
          \paramodapp(\clause{C}_5, \clause{C}_8),\uniapp,\scriptsize\cnfapp \colon &
            \clause{C}_9 \colon [\sk^{\mathrm{inv}} \;
                                  \big(\sk \; \lambda Z_\iota.\, \neg (\sk^{\mathrm{inv}} \; Z \; Z)\big) \;
                                  \big(\sk \; \lambda Z_\iota.\, \neg (\sk^{\mathrm{inv}} \; Z \; Z)\big)]^\pospol \\
          \paramodapp(\clause{C}_9, \clause{C}_8), \uniapp \colon &
           \emptyclause
        \end{array}
        $}\\
        
  \noindent The introduced Skolem symbol $\sk$ is of type $\iota(o\iota)$
  and its (assumed) left inverse, denoted $\sk^{\textrm{inv}}$ of type $o\iota\iota$, is inferred by
  $(\mathrm{INJ})$ based on the injectivity specification given by clause $\clause{C}_0$.
  The inferred property of $\sk^{\textrm{inv}}$ is given by $\clause{C}_2$.
  The injective Cantor Theorem is part of the TPTP library as problem 
  \texttt{SYO037\textasciicircum1} and could
  not be solved
  by any existing HO ATP system before.
\end{example}

%%%%%%%%%%%%%%%%%%%%%%%%%%%%%%%%%%%%%%%%%%%%%%%%%%%%%%%%%%%%%%%%%%%%%%%%%%%%%
%%%%%%%%%%%%%%%%%%%%%%%%%%%%%%%%%%%%%%%%%%%%%%%%%%%%%%%%%%%%%%%%%%%%%%%%%%%%%
%%%%%%%%%%%%%%%%%%%%%%%%%%%%%%%%%%%%%%%%%%%%%%%%%%%%%%%%%%%%%%%%%%%%%%%%%%%%%

\section{System Architecture and Implementation}\label{sec:impl}
As mentioned before, the main goal of the Leo-III prover is to achieve
effective automation of reasoning in HOL, and, in particular, to address the
shortcomings of resolution-based approaches when handling equality. To
that end, the implementation of Leo-III uses the complete EP calculus presented in \S\ref{sec:paramod}
as a starting point, and furthermore implements the rules from \S\ref{ssec:extcalculus}.
Although EP is still unordered
and Leo-III therefore generally suffers from the same drawbacks as experienced in first-order
paramodulation, including state space explosions and a prolific proof search, the idea is to use EP anyway
as a basis for Leo-III and to pragmatically tackle the problems with additional calculus rules
(cf. \S\ref{ssec:extcalculus}),
and optimizations and heuristics on the implementation level.
As a further
technical adjustment, the term representation data structures of Leo-III do not assume primitive
equality to be the only primitive logical connective. While this is handy from a theoretical point of view,
an explicit representation of further primitive logical connectives is beneficial from a practical side.
%Pragmatic extensions
%have recently been persued by first-order ATP systems for enabling (restricted)
%HO reasoning~\cite{barbosa2019extending,bhayat2019}.

An overview of Leo-III's top level architecture is displayed in Fig.~\ref{fig:architecture}.
After parsing the problem statement, a symbol-based relevance filter adopted from Meng and Paulson~\cite{meng2009lightweight}
is employed for premise selection. The input formulas that pass the relevance filter are translated
into polymorphically typed $\lambda$-terms (Interpreter) and are then passed \hl{to the \textbf{Proof Search} component.}
In this component, the main top-level proof search algorithm is implemented.
Subsequent to initial pre-processing, this algorithm repeatedly invokes procedures from a dedicated component,
\hl{denoted \textbf{Control}, that} acts as a facade to the concrete calculus rules, and moreover manages, selects
and applies different heuristics that may restrict or guide the application of the calculus rules.
These decisions are based on current features of the proof search progress
and user-provided parameter values; such information is bundled in a \hl{dedicated \textbf{State} object.}
If the proof search is successful, the Proof Search component may output a proof object
that is constructed by the Control module on request. Indexing data structures are employed for speeding up
frequently used procedures.

\begin{figure}[tb]
\centering
\includegraphics[width=\textwidth]{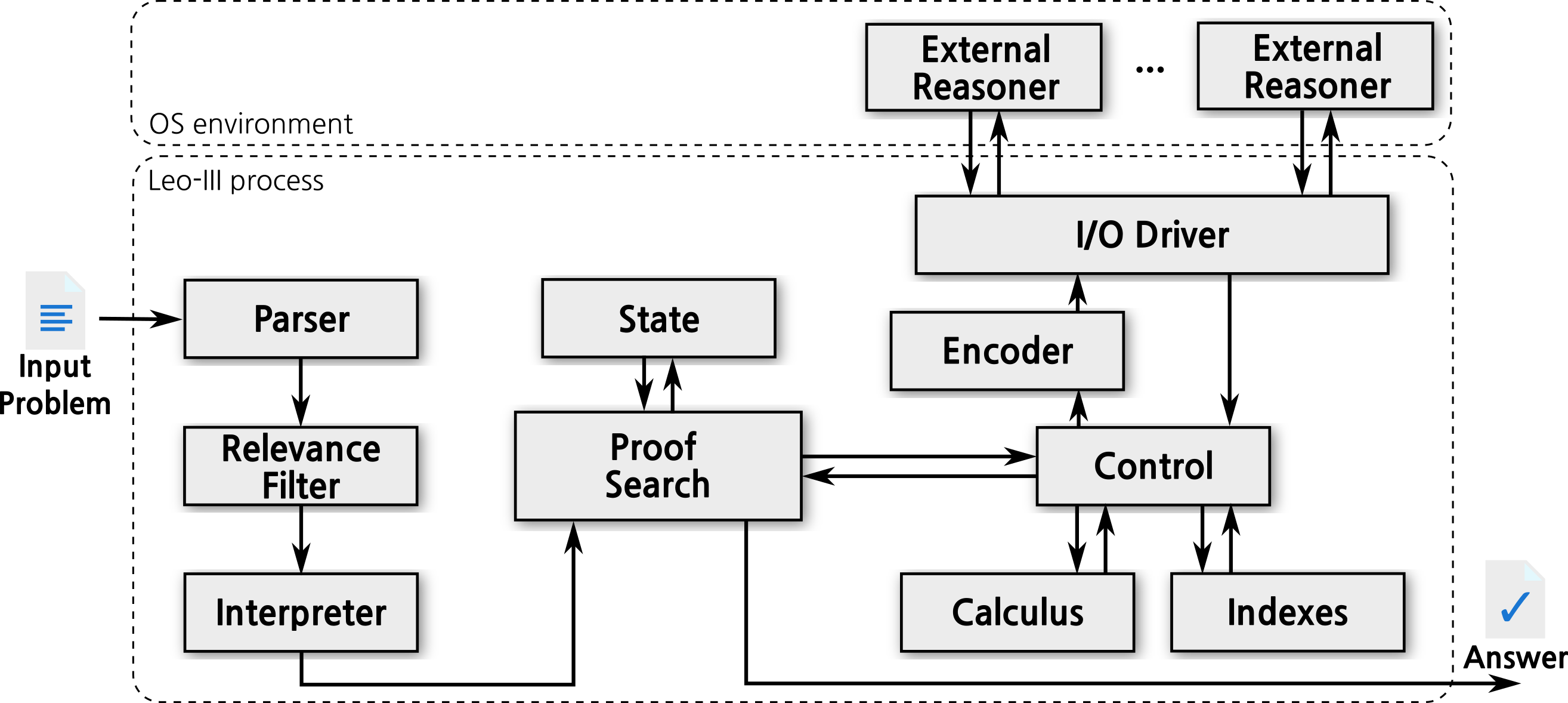}
\caption{Schematic diagram of Leo-III's architecture. The arrows indicate directed information flow. The external reasoners are executed asynchronously (non-blocking) as dedicated processes of the operating system.\label{fig:architecture}}
\end{figure}

Note that the proof search procedure itself does not have direct access \hl{to the \textbf{Calculus} module} in Fig.~\ref{fig:architecture}:
Leo-III implements a layered multi-tier architecture in which lower tiers (e.g.\ the Calculus component)
are only accessed through a higher tier (e.g.\ the Control). This allows for a modular and more flexible
structure in which single components can be replaced or enriched without major changes to others. It furthermore
improves maintainability as individual components implement fewer functionality (separation of concerns).
Following this approach, the Calculus component merely implements the inference rules of Leo-III's
calculus but it does not specify when to apply them, nor does it provide functionality to 
decide whether individual calculus rules should be applied in a given situation (e.g.\, with respect to some heuristics).
The functions provided by this module are low-level; invariantly, there are approximately as many
functions in the Calculus module as there are inference rules.
The Control component, in contrast, bundles certain inference rule applications with simplification routines,
indexing data structure updates, and heuristic decision procedures in single high-level procedures.
These procedures are then invoked by the Proof Search which passes its current search state
as an argument to the calls. The State component is then updated accordingly by
the Proof Search using the results of such function calls. The Proof Search
module implements a saturation loop that is discussed further below.

Leo-III makes use of external (mostly first-order) ATP systems for discharging proof obligations.
If any external reasoning system finds the submitted
proof obligation to be unsatisfiable, the original HOL problem is unsatisfiable as well
and a proof for the original conjecture is found.
Invocation, translation and utilization of the external results are also bundled by the Control module, 
cf.\ further below for details.
%Indexing data structures are employed for speeding up
%frequently used procedures.
%The current status of the saturation process, including selected
%parameters and statistical information, is maintained in the State component.
%The strict separation between saturation, state, control and calculus
%makes the Leo-III fairly simple to maintain and extend.
%introduces several data flow indirections. Nevertheless, the layered architecture
%helps to isolate certain implementation tasks (improvements
%as well as corrections) to a minimum of affected routines where often
%the visible interface assertions stay unchanged throughout the modifications.

\subsection{Proof search\label{ssec:proofsearch}}
The overall proof search procedure of Leo-III consists of three consecutive phases:
preprocessing, saturation and proof reconstruction.

During preprocessing, the input formulas are transformed into a fully \sloppy
Skolemized $\beta\eta$-normal clausal normal form. In addition, 
methods including definition expansion, simplification, miniscoping,
replacement of defined equalities, and clause
renaming~\cite{DBLP:conf/cade/WisniewskiSKB16} are applied, cf. Steen's
thesis for details~\cite{steen2018}.

Saturation is organized as a sequential procedure
that iteratively saturates the set of input clauses with respect to EP (and its extensions)
until the empty clause is derived.
The clausal search space is structured
using two sets $U$ and $P$ of {unprocessed}
clauses and {processed} clauses, respectively.
Initially, $P$ is empty and $U$ contains all clauses generated from the input 
problem. 
Intuitively, the algorithm iteratively selects an unprocessed clause $g$
(the {given clause}) from
$U$. If $g$ is the empty clause, the initial clause set is shown to be
inconsistent and the algorithm terminates. If $g$ is not the empty clause,
all inferences involving $g$ and (possibly) clauses in $P$ are generated
and inserted into $U$. The resulting invariant is that all inferences between
clauses in $P$ have already been performed.
Since in most cases the number of clauses that can be generated during
proof search is infinite, the saturation process is limited artificially
using time resource bounds that can be configured by the user.

Leo-III employs a variant of the DISCOUNT~\cite{Denzinger1997} loop
that has its intellectual
roots in the E prover~\cite{Schulz:2002:EBT:1218615.1218621}.
Nevertheless, some modifications are necessary to address the specific
requirements of reasoning in HOL. 
Firstly, since formulas can occur within subterm positions and, in particular,
within proper equalities, many of the generating and modifying inferences may 
produce non-CNF clauses albeit having proper clauses as premises. This implies
that, during a proof loop iteration, potentially every clause needs to be renormalized.
Secondly, since higher-order unification is undecidable, unification procedures
cannot be used as an eager inference filtering mechanism (e.g., for paramodulation and
factoring) nor can they be integrated as an isolated procedure on the meta-level 
as done in first-order procedures.
As opposed to the first-order case, clauses that have unsolvable unification constraints
are not discarded but nevertheless inserted into the search space.
This is necessary in order to retain completeness. 
%However, as discussed below, certain
%search strategies might change this behavior for practical considerations.

If the empty clause was inferred during saturation and the user requested a
proof output, a proof object is generated using backwards traversal of the
respective search subspace. Proofs in Leo-III are presented as
TSTP refutations~\cite{Sut07-CSR}, cf. \S\ref{ssec:proofs} for details.

\subsection{Polymorphic Reasoning}
Proof assistants such as Isabelle/HOL~\cite{DBLP:books/sp/NipkowPW02} and
Coq~\cite{DBLP:series/txtcs/BertotC04} are based on type systems that extend simple types
with, e.g., polymorphism, type classes, dependent types and further type concepts.
Such expressive type systems allow structuring knowledge
in terms of reusability and are of major importance in practice.

Leo-III supports reasoning in first-order and higher-order logic
with rank-1 polymorphism.
The support for polymorphism has been strongly
influenced by the recent development of the TH1 format for
representing problems in rank-1 polymorphic HOL~\cite{DBLP:conf/cade/KaliszykSR16},
extending the standard THF syntax~\cite{DBLP:journals/jfrea/SutcliffeB10} for HOL.
The extension of Leo-III to polymorphic reasoning does not require modifications of the general proof
search process as presented further above. Also, the data structures of Leo-III are already expressive enough to
represent polymorphic formulas, cf. technical details in earlier work~\cite{DBLP:conf/lpar/SteenWB17}.
%However, on the control and inference rule layers, handling of type variables and
%employment of type unification for simple types with type variables were added.

Central to the polymorphic variant of Leo-III's calculus is the notion of {type unification}.
Type unification between two types $\tau$ and $\nu$ yields a substitution $\sigma$ such that
$\tau\sigma \equiv \nu\sigma$, if such a substitution exists. The most general type unifier
is then defined analogously to term unifiers.
Since unification on rank-1 polymorphic types is essentially a first-order
unification problem, it is decidable and unitary, i.e., it yields
a unique most general unifier if one exists.
Intuitively, whenever a calculus rule of EP requires two premises to have the same
type,
it then suffices in the polymorphic extension of EP to require that the types are unifiable.
For a concrete inference, the type unification is then applied first to the clauses, followed
by the standard inference rule itself.

Additionally, Skolemization needs to be adapted to account for
free type variables in the scope of existentially quantified variables.
As a consequence, Skolem constants that are introduced, e.g., during clausification
are polymorphically typed symbols $\mathrm{sk}$ that are applied to the free type variables $\overline{\alpha^i}$
followed by the free term variables $\overline{X^i}$, yielding the final Skolem term
$(\mathrm{sk} \; \overline{\alpha^i} \; \overline{X^i})$, where $\mathrm{sk}$ is the fresh
Skolem constant. A similar construction is used for general bindings that
are employed by primitive substitution or projection bindings during unification.
A related approach is employed by Wand in the extension of the first-order ATP system SPASS
to polymorphic types~\cite{DBLP:phd/hal/Wand17}.

A full investigation of the formal properties of these calculus extensions to polymorphic
HOL is further work.

\subsection{External Cooperation}
Leo-III's saturation procedure periodically requests the invocation of 
external reasoning systems at the Control module for discharging proof obligations
that originate from its current search space. Upon request the Control module
checks, among other things, whether the set of processed clauses $P$ changed significantly since the last
request. If this is the case and the request is granted, the search space is enqueued for
submission to external provers. If there are no external calls awaiting to be executed,
the request is automatically granted. This process is visualized in Fig.~\ref{fig:externalcoop}. 
The invocation of an external ATP system is executed asynchronously 
(non-blocking), hence the internal proof search continues while awaiting the result of the external system.
Furthermore, as a consequence of the non-blocking nature of external cooperation, multiple
external reasoning systems (or multiple instances of the same) may be employed in parallel.
To that end, a \hl{dedicated \textbf{I/O driver} is implemented} that manages the asynchronous 
calls to external processes and collects the incoming results.

\begin{figure}[tb]
\centering
\includegraphics[width=\textwidth]{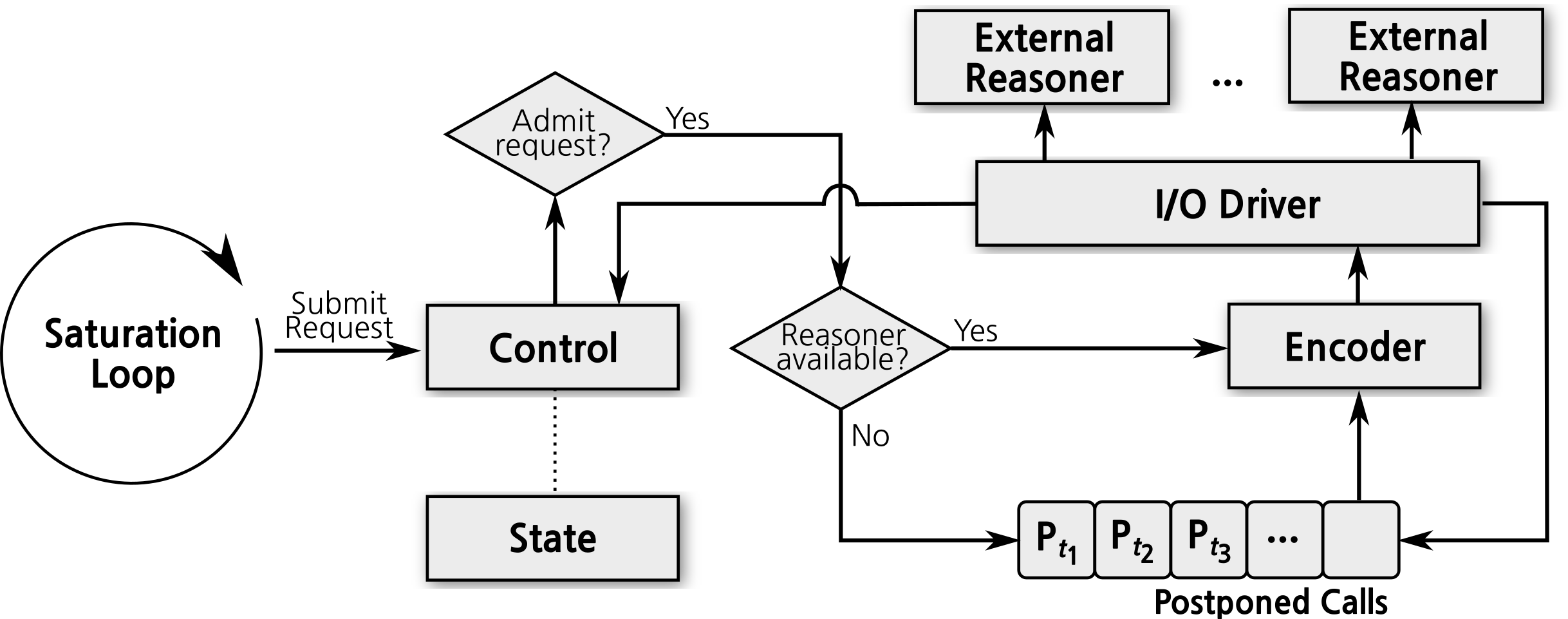}
\caption{Invocation of external reasoning systems during proof search. The solid arrows denote data flow
  through the respective modules of Leo-III. A dotted line indicates indirect use of auxiliary
  information. Postponed calls are selected by the I/O driver
  after termination of outstanding external calls. \label{fig:externalcoop}}
\end{figure}

The use of different external reasoning systems is also challenging from
a practical perspective: Different ATP systems support different logics
and furthermore different syntactical fragments of these logics. This is addressed in Leo-III
using an encoding \hl{module (cf. \textbf{Encoder} in Fig.~\ref{fig:architecture} resp.\ Fig.~\ref{fig:externalcoop})}
that translates the polymorphically typed higher-order clauses to monomorphic higher-order formulas,
or to polymorphic or monomorphic typed first-order clauses. It also removes unsupported
language features and replaces them with equivalent formulations. 
Fig.~\ref{fig:translation} displays the translation pipeline of Leo-III
for connecting to external ATP systems.
The Control module of Leo-III will automatically
select the encoding target to be the most expressive logical language that is still supported
by the external system~\cite{DBLP:conf/lpar/SteenWSB17}.
The translation process combines heuristic
monomorphization~\cite{boehmephd,DBLP:journals/corr/BlanchetteB0S16} steps
with standard encodings of higher-order language features~\cite{DBLP:journals/jar/MengP08} in first-order logic.
For some configurations there are multiple possible approaches (e.g., either monomorphize
from TH1 to TH0 and then encode to TF1, or encode directly to TF1), in these cases a default is 
fixed but the user may chose otherwise via command-line parameters.

While LEO-II relied on cooperation with untyped first-order provers, Leo-III
aims at exploiting the relatively young support of simple types in first-order ATP systems.
As a consequence, the translation of higher-order proof obligations 
does not require the encoding of types as terms, e.g.,
by type guards or type tags~\cite{DBLP:conf/cade/CouchotL07,DBLP:journals/corr/BlanchetteB0S16}.
This approach reduces clutter and hence promises more effective cooperation.
Cooperation within Leo-III is by no means limited to
first-order provers. Various different systems, including first-order and higher-order ATP systems and
model finders, can in fact be used simultaneously, provided that they comply
with some common TPTP language standard. 
Supported TPTP languages for external cooperation include the TPTP dialects TF0~\cite{DBLP:conf/lpar/SutcliffeSCB12}, TF1~\cite{DBLP:conf/cade/BlanchetteP13},
TH0~\cite{DBLP:journals/jfrea/SutcliffeB10} and TH1~\cite{DBLP:conf/cade/KaliszykSR16}.

\begin{figure}[tb]
\centering
\includegraphics[width=\textwidth]{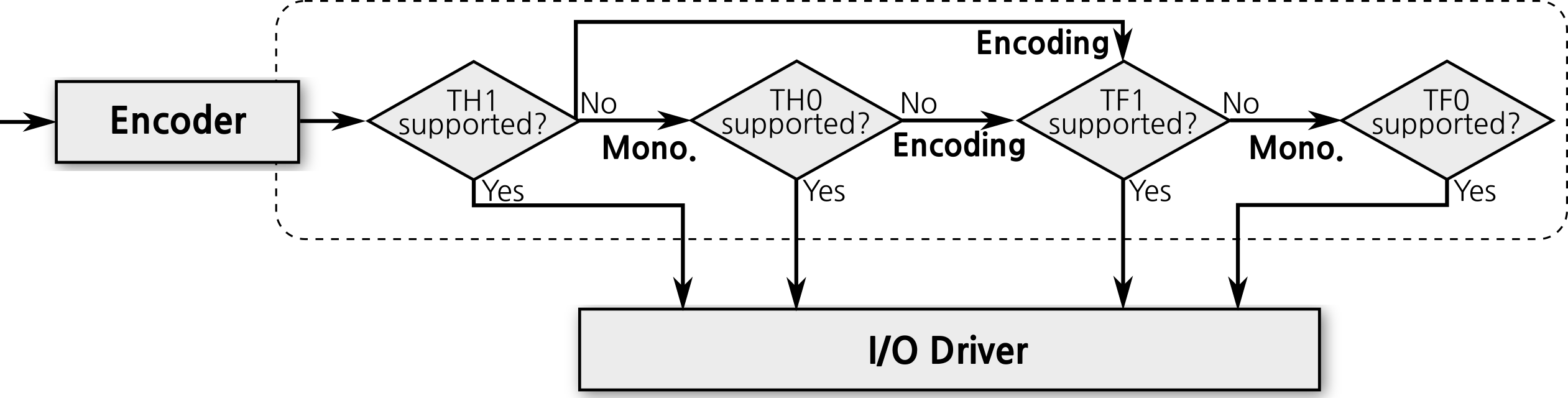}
\caption{Translation process in the encoder module of Leo-III. Depending on the supported logic fragments of the respective external reasoner, the clause set is translated to different logic formalism: Polymorphic HOL (TH1), monomorphic HOL (TH0), polymorphic first-order logic (TF1) or monomorphic (many-sorted) first-order logic (TF0). \label{fig:translation}}
\end{figure}

\subsection{Input and Output\label{ssec:proofs}}
Leo-III accepts all common TPTP dialects~\cite{DBLP:journals/jar/Sutcliffe17},
including untyped clause normal form (CNF), untyped and typed first-order logic
(FOF and TFF, respectively) and, as primary input format,
monomorphic higher-order logic (THF)~\cite{DBLP:journals/jfrea/SutcliffeB10}.
Additionally, Leo-III is one of the first higher-order ATP systems to support
reasoning in rank-1 polymorphic variants of these logics using the
TF1~\cite{DBLP:conf/cade/BlanchetteP13} and TH1~\cite{DBLP:conf/cade/KaliszykSR16} languages.

Leo-III rigorously implements the machine-readable
TSTP result standard~\cite{Sut07-CSR}
and hence outputs appropriate SZS ontology values~\cite{sutcliffe2008szs}.
The use of the TSTP output format allows for simple means of
communication and exchange of reasoning results between different reasoning tools
and, consequently, eases the employment of Leo-III within external tools.
Novel to the list of supported SZS result values for the Leo prover family is 
\texttt{ContradictoryAxioms}~\cite{sutcliffe2008szs}, which is reported if the input axioms 
were found to be inconsistent during the proof run
(i.e., if the empty clause could be derived without using the conjecture even once).
Using this simple approach, Leo-III identified 15 problems from the TPTP library to be inconsistent without any special setup.

Additional to the above described SZS result value,
Leo-III produces machine readable proof certificates if a proof was found
and such a certificate has been requested.
Proof certificates are an ASCII encoded, linearized, directed acyclic graph (DAG) of inferences
that refutes the negated input conjecture by ultimately generating the empty clause.
The root sources of the inference DAG are the given conjecture (if any) and
all axioms that have been used in the refutation.
The proof output records all intermediate inferences.
The representation again follows the TSTP format and records the inferences
using annotated \texttt{THF} formulas.
Due to the fine granularity of Leo-III proofs, it is often possible to verify them
step-by-step using external tools such as GDV~\cite{DBLP:journals/ijait/Sutcliffe06}.
A detailed description of Leo-III's proof output format and the information contained
therein can be found in Steen's PhD thesis~\cite[\S 4.5]{steen2018}.
An example of such a proof output is displayed in Fig.~\ref{fig:output}.

\begin{figure}[tb]
\begin{lstlisting}[basicstyle=\ttfamily\scriptsize,frame=single,keywordstyle=\bfseries,morekeywords={thf,inference,$false,$true,bind,file,SZS,Theorem,CNFRefutation}]
% SZS status Theorem for sur_cantor_th1.p
% SZS output start CNFRefutation for sur_cantor_th1.p
thf(skt1_type, type, skt1: $tType).
thf(sk1_type, type, sk1: (skt1 > (skt1 > $o))).
thf(sk2_type, type, sk2: ((skt1 > $o) > skt1)).
thf(1, conjecture, ! [T: $tType]: (
                              ~ ( ?[F:T > (T > $o)]: (
                                    ![Y:T > $o]: ?[X:T]: ((F @ X) = Y) ) )),
    file('sur_cantor_th1.p',sur_cantor) ).
thf(2, negated_conjecture, ~ ! [T:$tType]: (
                              ~ ( ?[F:T > (T > $o)]: (
                                    ![Y:T > $o]: ?[X:T]: ((F @ X) = Y) ) )),
    inference(neg_conjecture,[status(cth)],[1]) ).
thf(4,plain,! [A:skt1 > $o]: (sk1 @ (sk2 @ A) = A),
    inference(cnf,[status(esa)],[2]) ).
thf(6,plain,! [B:skt1, A:skt1 > $o]: ((sk1 @ (sk2 @ A) @ B) = (A @ B)),
    inference(func_ext,[status(esa)],[4])).
thf(8,plain,! [B:skt1, A:skt1 > $o]: ((sk1 @ (sk2 @ A) @ B) | ~ (A @ B)),
    inference(bool_ext,[status(thm)],[6])).
thf(272,plain,! [B:skt1, A:skt1 > $o]: ( sk1 @ (sk2 @ A) @ B) | 
                             ((A @ B) != ~ (sk1 @ (sk2 @ A) @ B)) | ~$true),
    inference(eqfactor_ordered,[status(thm)],[8])).
thf(294,plain,sk1 @ (sk2 @ (^ [A:skt1]: ~ (sk1 @ A @ A)))
                   @ (sk2 @ (^ [A:skt1]: ~ (sk1 @ A @ A))),
    inference(pre_uni,[status(thm)],[272:[
              bind(A, $thf(^ [C:skt1]: ~ (sk1 @ C @ C))),
              bind(B, $thf(sk2 @ (^ [C:skt1]: ~ (sk1 @ C @ C))))]])).
thf(7,plain,! [B:skt1, A:skt1 > $o]: (~ (sk1 @ (sk2 @ A) @ B)) | (A @ B)),
    inference(bool_ext,[status(thm)],[6])).
thf(17,plain,! [B:skt1, A:skt1 > $o]: ( (~ (sk1 @ (sk2 @ A) @ B)) | 
                           ((A @ B) != (~ (sk1 @ (sk2 @ A) @ B))) | ~$true),
    inference(eqfactor_ordered,[status(thm)],[7])).
thf(33,plain,~ (sk1 @ (sk2 @ (^ [A:skt1]: ~ (sk1 @ A @ A)))
                     @ (sk2 @ (^ [A:skt1]: ~ (sk1 @ A @ A)))),
    inference(pre_uni,[status(thm)],[17:[
              bind(A, $thf(^ [C:skt1]: ~ (sk1 @ C @ C))),
              bind(B, $thf(sk2 @ (^ [C:skt1]: ~ (sk1 @ C @ C))))]])).
thf(320,plain,$false,inference(rewrite,[status(thm)],[294,33])).
% SZS output end CNFRefutation for sur_cantor_th1.p
\end{lstlisting}
\caption{Proof output of Leo-III for the polymorphic variant (TH1 syntax) of the surjective variant of Cantor's theorem.\label{fig:output}}
\end{figure}

\subsection{Data Structures}

Leo-III implements a combination of term representation techniques;
term data structures are provided that 
admit expressive typing, efficient basic term operations and reasonable
memory consumption~\cite{DBLP:conf/lpar/BenzmullerSW17}.
Leo-III employs
a so-called spine notation~\cite{DBLP:journals/logcom/CervesatoP03},
which imitates first-order-like terms
in a higher-order setting. Here, terms are either type abstractions,
term abstractions or applications
of the form $f \cdot (s_1;s_2;\ldots)$, where the head $f$ 
is either a constant symbol, a bound variable or a complex term, and
the spine $(s_1;s_2;\ldots)$ is a linear list of arguments that are, again, spine terms.
Note that if a term is $\beta$-normal, $f$ cannot be a complex term. This observation
leads to an internal distinction between $\beta$-normal and (possibly) non-$\beta$-normal
spine terms. The first kind has an optimized representation, where the head is
only associated with an integer representing a constant symbol or variable.

Additionally,
the term representation incorporates explicit substitutions~\cite{DBLP:journals/jfp/AbadiCCL91}.
In a setting of explicit substitutions,
substitutions are part of the term language and can thus be
postponed and composed before being applied to the term. This technique admits
more efficient $\beta$-normalization and substitution operations as terms
are traversed only once, regardless of the number of substitutions applied.

Furthermore, Leo-III implements a locally nameless
representation using de Bruijn indices~\cite{Bruijn72}. 
In the setting of polymorphism~\cite{DBLP:conf/lpar/SteenWB17}, types may also contain
variables. Consequently, the nameless representation of variables is extended
to type variables~\cite{KRTU99}. The definition of de Bruijn indices for type
 variables is analogous to the one for term variables. In fact, since only
rank-1 polymorphism is used, type indices are much easier to manage than
term indices. This is due to the fact that there are no type quantifications
except for those on top level.
One of the most important advantages
of nameless representations over representations with explicit variable names
is that $\alpha$-equivalence is reduced to syntactical equality, i.e.,
two terms are $\alpha$-equivalent if and only if their nameless representations are equal.

Terms are perfectly shared within
Leo-III, meaning that each term is only constructed once and then reused
between different occurrences. This reduces memory
consumption in large knowledge bases and it allows constant time
term comparison for syntactic equality using the term's pointer to its
unique physical representation. For fast basic term retrieval operations 
(such as access of a head symbol, subterm occurrences, etc.)
terms are kept in $\beta$-normal $\eta$-long form.

A collection of basic data structures and algorithms for the implementation of
higher-order reasoning systems has been isolated from the
implementation of Leo-III into a dedicated framework called
\textsc{LeoPARD}~\cite{DBLP:conf/mkm/WisniewskiSB15}, which is freely available
at GitHub.\footnote{
  \underline{Leo}’s \underline{P}arallel \underline{Ar}chitecture and \underline{D}atastructures (\textsc{LeoPARD}) can be found at \url{https://github.com/leoprover/LeoPARD}.
}
This framework
provides many stand-alone components, including a term data
structure for polymorphic $\lambda$-terms, unification and subsumption
procedures, parsers for all TPTP languages, and further utility
procedures and pretty printers for TSTP compatible proof 
representations.

%%%%%%%%%%%%%%%%%%%%%%%%%%%%%%%%%%%%%%%%%%%%%%%%%%%%%%%%%%%%%%%%%%%%%%%%%%%%%
%%%%%%%%%%%%%%%%%%%%%%%%%%%%%%%%%%%%%%%%%%%%%%%%%%%%%%%%%%%%%%%%%%%%%%%%%%%%%
%%%%%%%%%%%%%%%%%%%%%%%%%%%%%%%%%%%%%%%%%%%%%%%%%%%%%%%%%%%%%%%%%%%%%%%%%%%%%

\section{Reasoning in Non-Classical Logics}\label{sec:ncl}
\newcommand{\nec}{\Box} \newcommand{\pos}{\Diamond}

Computer-assisted reasoning in non-classical logics (NCL) is of increasing
relevance for applications in artificial intelligence, computer science,
mathematics and philosophy. 
However, with a few exceptions, most of the available reasoning systems
focus on classical logics only, including common contemporary first-order and
higher-order theorem proving systems. In particular for quantified
NCLs there are only very few systems available to
date.

As an alternative to the development of specialized theorem proving systems,
usually one for each targeted NCL, a shallow semantical embedding (SSE) approach allows
for a simple adaptation of existing higher-order reasoning systems 
to a broad variety of expressive logics~\cite{J41}. 
In the SSE approach, the non-classical target logic is shallowly embedded
in HOL by providing a direct encoding of its semantics, typically
a set theoretic or relational semantics, within the term language of HOL.
As a consequence, showing validity in the target logic is reduced to higher-order
reasoning and HOL ATP systems can be applied for this task.
Note that this technique, in principle, also allows off-the-shelf automation
even for quantified NCLs as quantification and binding mechanisms of the
HOL meta logic can be utilized.
This is an interesting option in many application areas, e.g., in 
ethical and legal reasoning, as the respective communities
do not yet agree on which logical system should actually be preferred.
The resource-intensive implementation of dedicated new provers
for each potential system is not an adequate option for
rapid prototyping of prospective candidate logics and can be avoided using SSEs.

Leo-III is addressing this gap. In addition to its HOL reasoning capabilities,
it is the first system that natively supports reasoning in a wide range of
normal higher-order modal logics (HOMLs)~\cite{DBLP:conf/lpar/GleissnerSB17}.
To achieve this, Leo-III internally implements the SSE approach
for quantified modal logics based on their Kripke-style
semantics~\cite{blackburn2006handbook,DBLP:journals/igpl/BenzmullerP10}.

Quantified modal logics are associated with many different notions of semantics~\cite{blackburn2006handbook}.
Differences may, e.g., occur in the interaction between quantifiers and
the modal operators, as expressed by the Barcan formulas~\cite{DBLP:journals/jsyml/Barcan46},
or regarding the interpretation of constant symbols as rigid or non-rigid.
Hence, there are various subtle but meaningful variations in multiple individual facets of which many
combinations yield a distinct modal logic.
Since many of those variations have their particular applications, there is no
reasonably small subset of generally preferred modal logics to which a theorem proving system
should be restricted.
This, of course, poses a major practical challenge.
Leo-III, therefore, supports all quantified Kripke-complete normal modal logics~\cite{DBLP:conf/lpar/GleissnerSB17}.
In contrast, other ATP systems for (first-order) quantified modal logics
such as MleanCoP~\cite{DBLP:conf/cade/Otten14} and MSPASS~\cite{DBLP:conf/tableaux/HustadtS00}
only support a comparably small subset of all possible semantic variants. 

\begin{figure}[t]
\centering
\begin{lstlisting}[basicstyle=\scriptsize\ttfamily,frame=single,keywordstyle=\bfseries,morekeywords={thf,$constants,$quantification,$consequence,$modalities}]
thf(s5_spec, logic, ($modal := [
      $constants := $rigid, $quantification := $constant,
      $consequence := $global, $modalities := $modal_system_S5 ])).
thf(becker,conjecture,( ! [P:$i>$o,F:$i>$i, X:$i]: (? [G:$i>$i]:
      (($dia @ ($box @ (P @ (F @ X)))) => ($box @ (P @ (G @ X))))))). 
\end{lstlisting} 
\caption{A corollary of Becker's postulate formulated in modal THF, representing the 
formula $\forall P_{o\iota}.\,\forall F_{\iota\iota}.\,\forall
X_\iota.\, \exists G_{\iota\iota}.\, (\pos \nec P(F(X)) \Rightarrow \nec
P(G(X)))$. The first statement specifies the modal logic to be logic S5 with constant domain quantification, rigid constant symbols and a global consequence relation.\label{fig:beckerex}}
\end{figure}

Unlike in classical logic, a problem statement comprised only of axioms and a conjecture
to prove
does not yet fully specify a reasoning task in quantified modal logic.
It is necessary to also explicitly state the intended semantical details in which the problem
is to be attacked.
This is realized by including a meta-logical specification entry in the header of the modal logic
problem file in form of a TPTP THF formula of role \verb|logic|.
This formula then specifies respective details
for each relevant semantic dimension, cf.~\cite{DBLP:conf/ruleml/GleissnerS18} for more details
on the specification syntax. An example is displayed in Fig.~\ref{fig:beckerex}.
The identifiers \verb|$constants|, \verb|$quantification|
and \verb|$consequence| in the given case specify that constant symbols are rigid,
that the quantification semantics is constant domain, and that the consequence relation is global, respectively,
and \verb|$modalities| specifies the properties of the modal connectives by means of
fixed modal logic system names, such as S5 in the given case, or, alternatively, by listing individual names of modal axiom schemes.
This logic specification approach was developed in earlier work~\cite{DBLP:conf/cade/WisniewskiSB16}
and subsequently improved and enhanced to a work-in-progress TPTP language extension proposal.\footnote{
   See \url{http://tptp.org/TPTP/Proposals/LogicSpecification.html}.
  }

When being invoked on a modal logic problem file as displayed in Fig.~\ref{fig:beckerex}, 
Leo-III parses and analyses the logic specification part,
automatically selects and unfolds the corresponding definitions
of the SSE approach, adds appropriate axioms and then starts reasoning in (meta logic)
HOL. This process is visualized in Fig.~\ref{fig:embedding}.
Subsequently, Leo-III returns SZS 
compliant result information and, if successful, also a proof object in TSTP format.
Leo-III's proof output for the example from Fig.~\ref{fig:beckerex} is displayed in
Appendix~\ref{appendix:beckerproof}; it shows the relevant SSE definitions that have been
automatically generated by Leo-III according to the given logic specification, and this
file can be verified by GDV~\cite{DBLP:journals/jar/Sutcliffe17}.
Previous experiments~\cite{DBLP:conf/lpar/GleissnerSB17,DBLP:conf/lpar/BenzmullerR13} have shown
that the SSE approach offers an
effective automation of embedded non-classical logics for the case of quantified modal logics.
  
As of version 1.2,
Leo-III supports, but is not limited to, first-order and higher-order extensions of the well known modal
logic cube~\cite{blackburn2006handbook}. When taking the different parameter combinations into account
this amounts to more than 120 supported HOMLs.
The exact number of supported logics is in fact
much higher, since Leo-III also supports multi-modal logics
with independent modal system specification for each modality.
Also, user-defined combinations of rigid and non-rigid constants 
and different quantification semantics per type domain are possible.
In addition to modal logic reasoning, Leo-III also integrates SSEs of deontic
logics~\cite{DBLP:conf/deon/BenzmullerFP18}.

\begin{figure}[t]
    \centering
    \includegraphics[width=.9\textwidth]{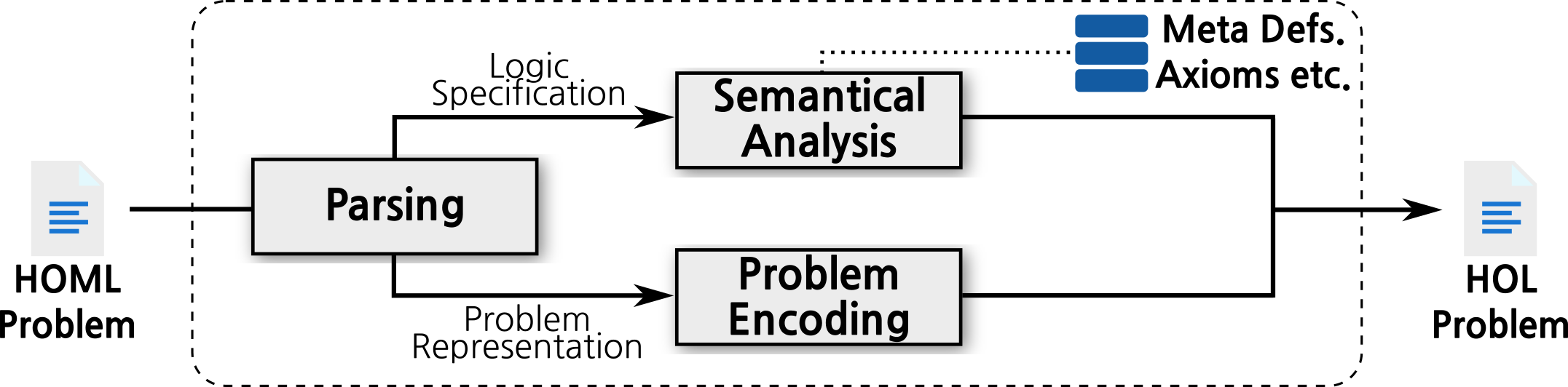}
    \caption{Schematic structure of the embedding preprocessing procedure in Leo-III.\label{fig:embedding}}
  \end{figure}

%%%%%%%%%%%%%%%%%%%%%%%%%%%%%%%%%%%%%%%%%%%%%%%%%%%%%%%%%%%%%%%%%%%%%%%%%%%%%
%%%%%%%%%%%%%%%%%%%%%%%%%%%%%%%%%%%%%%%%%%%%%%%%%%%%%%%%%%%%%%%%%%%%%%%%%%%%%
%%%%%%%%%%%%%%%%%%%%%%%%%%%%%%%%%%%%%%%%%%%%%%%%%%%%%%%%%%%%%%%%%%%%%%%%%%%%%

\section{Evaluation}\label{sec:eval}
In order to quantify the performance of Leo-III, an
  evaluation based on various benchmarks was conducted, cf.~\cite{DBLP:conf/cade/SteenB18}. %set-up
  Three benchmark data sets were used:
  \begin{itemize}
    \item \emph{TPTP TH0} (2463 problems) is the set of all monomorphic HOL (TH0)
          problems from the TPTP library v7.0.0~\cite{DBLP:journals/jar/Sutcliffe17}
          that are annotated as theorems.
          The TPTP library is a de facto standard for the evaluation of ATP systems.
          %and is also used as benchmark selection basis for the yearly CADE ATP
          %System Competitions (CASC). The problems originate from
          %various problem domains and sources.
          %The TPTP THF problems in 
          %particular fostered the development of current HO ATPs, including
          %LEO-II and Satallax.
    \item \emph{TPTP TH1} (442 problems) is the subset of all 666 polymorphic
          HOL (TH1) problems from TPTP v7.0.0 that are annotated
          as theorems and do not contain arithmetic.
          The problems mainly consist of HOL Light core exports and
          Sledgehammer translations of various Isabelle/HOL theories.
    \item \emph{QMLTP} (580 problems) is the subset of
          all mono-modal benchmarks from the QMLTP library
          1.1~\cite{DBLP:conf/cade/RathsO12}. The QMLTP library only contains
          propositional and first-order modal logic problems.
          Since each problem may have a different validity status for each
          semantic notion of modal logic, all problems are selected.
          The total number of tested benchmarks in this category
          thus is 580 (raw problems) $\times$ 5 (modal systems) $\times$ 3
          (quantification semantics). QMLTP assumes rigid constant symbols 
          and a local consequence relation; this is adopted here.

  \end{itemize}
 
  \noindent The evaluation measurements were taken on the StarExec cluster
  in which each compute node is a \SI{64}{\bit} Red Hat Linux (kernel 3.10.0) machine with
  \SI{2.40}{\giga\hertz} quad-core processors and a main memory of \SI{128}{\giga\byte}.
  For each problem, every prover was given a CPU time limit of \SI{240}{\second}.
  % and unrestricted memory usage.
  The following theorem provers were employed in one or more of the
  benchmark sets (indicated in parentheses):
  Leo-III 1.2 (TH0, TH1, QMLTP) used with E, CVC4 and iProver as external first-order ATP systems,
  Isabelle/HOL 2016~\cite{DBLP:books/sp/NipkowPW02} (TH0, TH1), Satallax 3.0~\cite{DBLP:conf/cade/Brown12} (TH0),
  Satallax 3.2 (TH0), LEO-II 1.7.0 (TH0), Zipperposition 1.1 (TH0) and
  MleanCoP 1.3~\cite{DBLP:conf/cade/Otten14}
  (QMLTP).
  
  The experimental results are discussed next; additional details on
  Leo-III's performance are presented in Steen's
  thesis~\cite{steen2018}.
  
  \paragraph{TPTP TH0.} Table~\ref{table:leo:eval} (a) displays each system's performance on the TPTP TH0 data set.
  For each system the absolute number (Abs.) and relative share (Rel.) of solutions is
  displayed. Solution here means that a system is able to establish the SZS status \texttt{Theorem}
  and also emits a proof certificate that substantiates this claim.
  All results of the system, whether successful or not, are counted and categorized
  as THM (\texttt{Theorem}), CAX (\texttt{ContradictoryAxioms}), GUP (\texttt{GaveUp}) and TMO (\texttt{TimeOut})
  for the respective SZS status of the returned
  result.\footnote{Remark on CAX: In this special case of THM
    (theorem) the given axioms are inconsistent so that
    anything follows, including the given conjecture. Hence,
    it is counted against solved problems.}
  Additionally, the average and sum of all CPU times
  and wall clock (WC) times over all solved problems is presented.

  Leo-III successfully solves 2053 of 2463 problems
  (roughly \SI{83.39}{\percent})
  from the TPTP TH0 data set. This is 735 (\SI{35.8}{\percent}) more than Zipperposition, %% relative change
  264 (\SI{12.86}{\percent}) more than LEO-II and 81 (\SI{3.95}{\percent}) more than Satallax 3.0.
  The only ATP system that solves more problems is the most recent version of Satallax
  (3.2) that successfully solves 2140 problems, which is approximately
  \SI{4.24}{\percent} more than Leo-III. 
  Isabelle currently does not emit proof certificates (hence zero solutions).
  Even if results without explicit proofs are counted,
  Leo-III would still have a slightly higher number of problems solved than Satallax 3.0 and
  Isabelle/HOL with 25 (\SI{1.22}{\percent}) and 31 (\SI{1.51}{\percent}) additional solutions, respectively.
  Leo-III, Satallax (3.2), Zipperposition and LEO-II produce 18,
  17, 15 and 3 unique solutions, respectively. Evidently, Leo-III currently
  produces more unique solutions than any other ATP system in this setting.
  %
  %Additionally to the strong benchmark results discussed above,
  Leo-III
  solves twelve problems that are currently
  not solved by any other system indexed by TPTP.\footnote{
  This information is extracted from the TPTP
  problem rating information that is attached to each problem.
  The unsolved problems are \texttt{NLP004\textasciicircum7},
    \texttt{SET013\textasciicircum7}, \texttt{SEU558\textasciicircum1}, \texttt{SEU683\textasciicircum1},
    \texttt{SEV143\textasciicircum5},
    \texttt{SYO037\textasciicircum1}, \texttt{SYO062\textasciicircum4.004},
    \texttt{SYO065\textasciicircum4.001},
    \texttt{SYO066\textasciicircum4.004}, 
    \texttt{MSC007\textasciicircum1.003.004}, \texttt{SEU938\textasciicircum5} and 
    \texttt{SEV106\textasciicircum5}.
  }
  %Here, unsolved means that no TPTP-indexed state-of-the-art ATP system has
  %been able to establish an SZS success value at any TPTP version release.
  %This information is extracted from the TPTP
  %problem rating that is attached to each problem.
  %These problems cover a range of different applications, including
  %natural language processing, % (via embedding to modal logic),
  %reasoning in
  %intuitionistic logic %(likewise via embedding)
  %and set theory.
  
  \begin{table}[tb]
  \centering
  \caption{Detailed result of the benchmark measurements.\label{table:leo:eval}}
  \subfloat[TPTP TH0 data set (2463 problems)]{
    \resizebox{\linewidth}{!}{
    \begin{tabular}{l|rr|rrrr|rr|rr}
    \textbf{Systems}       & \multicolumn{2}{c|}{\textbf{Solutions}}               & \multicolumn{4}{c|}{\textbf{SZS Results}}                                                                                & \multicolumn{2}{c|}{\textbf{Avg. Time} [\si{\second}]}         & \multicolumn{2}{c}{\textbf{$\Sigma$ Time} [\si{\second}]} \\
    \textbf{}              & \multicolumn{1}{l|}{\textbf{Abs.}} & \textbf{Rel.} & \multicolumn{1}{l|}{\textbf{THM}} & \multicolumn{1}{l|}{\textbf{CAX}} & \multicolumn{1}{l|}{\textbf{GUP}} & \textbf{TMO} & \multicolumn{1}{r|}{\;\;\textbf{CPU}} & \textbf{WC} &\multicolumn{1}{l|}{\textbf{CPU}}     & \textbf{WC}     \\ \hline
    \textbf{Satallax 3.2}  & 2140                               & 86.89         & 2140                              & 0                                 & 2                                 & 321          & 12.26                             & 12.31       & 26238            & 26339           \\
    \textbf{Leo-III}              & 2053                               & 83.39         & 2045                              & 8                                 & 16                                & 394          & 15.39                             & 5.61        & 31490            & 11508           \\
    \textbf{Satallax 3.0}  & 1972                               & 80.06         & 2028                              & 0                                 & 2                                 & 433          & 17.83                             & 17.89       & 36149            & 36289           \\
    \textbf{LEO-II}        & 1788                               & 72.63         & 1789                              & 0                                 & 43                                & 631          & 5.84                              & 5.96        & 10452            & 10661           \\
    \textbf{Zipperposition}     & 1318                               & 53.51         & 1318                              & 0                                 & 360                               & 785          & 2.60                               & 2.73        & 3421             & 3592            \\
    \textbf{Isabelle/HOL} & 0                                  & 0.00             & 2022                              & 0                                 & 1                                 & 440          & 46.46                             & 33.44       & 93933            & 67610          
    \end{tabular}
    }
  }
  \\
  \subfloat[TPTP TH1 data set (442 problems)]{
    \resizebox{\linewidth}{!}{
    \begin{tabular}{l|rr|rrrr|rr|rr}
    \textbf{Systems}       & \multicolumn{2}{c|}{\textbf{Solutions}}               & \multicolumn{4}{c|}{\textbf{SZS Results}}                                                                                & \multicolumn{2}{c|}{\textbf{Avg. Time} [\si{\second}]}         & \multicolumn{2}{c}{\textbf{$\Sigma$ Time} [\si{\second}]} \\
    \textbf{}              & \multicolumn{1}{l|}{\textbf{Abs.}} & \textbf{Rel.} & \multicolumn{1}{l|}{\textbf{THM}} & \multicolumn{1}{l|}{\textbf{CAX}} & \multicolumn{1}{l|}{\textbf{GUP}} & \textbf{TMO} & \multicolumn{1}{r|}{\;\;\textbf{CPU}} & \textbf{WC} & \multicolumn{1}{l|}{\textbf{CPU}}    & \textbf{WC}     \\ \hline
%    \textbf{Leo-III 1.2}       &                                    &               &                                   &                                   &                                   &              &                                   &             &                  &                 \\
 %   --- internal   & 87 & 19.68 & 83 & 4 & 14 & 341 & 29.79 & 16.86 & 2592 & 1467\\
 %   --- E          & 162 &36.65& 160 & 2 & 10 & 270 & 33.27 & 16.31 & 5391 & 2642\\
 %   --- CVC4       & 161 &36.43& 158 & 3 & 9 & 272 & 41.08 & 22.22 & 6615 & 3577\\
 %   --- iProver    & 144 &32.58& 141 & 3 & 11 & 287 & 36.66 & 20.63 & 5279 &2971\\
 %   --- Vampire    & 137 &31.00& 133 & 4 & 12 & 293 & 37.63 & 21.31 &5155 & 2919\\
    \textbf{Leo-III}  & 185 &41.86& 183 & 2 & 8 & 249 & 49.18 & 24.93 & 9099&  4613\\
    \textbf{Isabelle/HOL} & 0 &0.00& 237 & 0 & 23 & 182 & 93.53 &81.44 & 22404& 19300
    \end{tabular}
    }
  }
    
    \end{table}

  Satallax, LEO-II and Zipperposition show only small differences between
  their individual CPU and WC time on average and sum. A more precise measure for a system's utilization of % on average
  multiple cores is the so-called {core usage}. It is given by the
  average of the ratios of used CPU time to used wall clock time over all solved problems.
  The core usage of Leo-III for the TPTP TH0 data set is $2.52$. This means that,
  on average, two to three CPU cores are used during proof search by Leo-III.
  Satallax (3.2), LEO-II and Zipperposition show a quite opposite behavior with
  core usages of $0.64$, $0.56$ and $0.47$, respectively.
  
  \paragraph{TPTP TH1.}
  Currently, there exist only a few ATP systems that are capable of reasoning
  within polymorphic HOL as specified by TPTP TH1.
  The only exceptions are
  HOL(y)Hammer
  and Isabelle/HOL
  that schedule proof tactics within HOL Light and Isabelle/HOL,
  respectively.
  Unfortunately, only Isabelle/HOL was available for experimentation
  in a reasonably recent and stable
  version.
 % for a comparison to Leo-III.
  Table~\ref{table:leo:eval} (b) displays the measurement results for the TPTP TH1 data
  set. When disregarding proof certificates, Isabelle/HOL finds 237 theorems (\SI{53.62}{\percent})
  which is roughly \SI{28.1}{\percent} more than the number of solutions founds by Leo-III.
  Leo-III and Isabelle/HOL produce 35 and 69 unique solutions, respectively.
  
  \begin{figure}[tb] 
	  \centering
	  \begin{tikzpicture} 
		  \begin{axis}[
			  ybar=0pt,
			  %xlabel = Modal semantics,
			  xmin = 0.5,
			  xmax = 12.5,
			  ymin = 0,
			  ymax = 470,
			  axis x line* = bottom,
			  axis y line* = left,
			  ylabel= \# solved problems,
			  width= .95\textwidth,
			  height = 0.45\textwidth,
			  ymajorgrids = true,
			  extra y ticks={100,300},
			  bar width = 3mm,
			  xtick = {1,2,3,4,5,6,7,8,9,10,11,12},
			  xticklabels = {D/vary, D/cumul, D/const, T/vary, T/cumul, T/const, S4/vary, S4/cumul, S4/const, S5/vary, S5/cumul, S5/const},
			  x tick label style={rotate=45, anchor=east},
			  legend style={at={(0.03,0.97)},anchor=north west}
			  ]
			  \addplot+[blue_1, thick,draw=none,no markers] coordinates {
			    (1,159)
			    (2,178)
			    (3,203)
			    (4,209)
			    (5,233)
			    (6,263)
			    (7,246)
			    (8,280)
			    (9,316)
			    (10,340)
			    (11,457)
			    (12,457)
			  };
			  
			  \addplot+[blue_2b, thick,draw=none,no markers] coordinates { %% farbe anders
			    (1,185)
			    (2,206)
			    (3,223)
			    (4,224)
			    (5,251)
			    (6,271)
			    (7,289)
			    (8,350)
			    (9,366)
			    (10,360)
			    (11,436)
			    (12,436)
			  };
			  \legend{Leo-III, MleanCoP}
		  \end{axis} 
	  \end{tikzpicture}
	  \caption{Comparison of Leo-III and MleanCoP on the QMLTP data set (580 problems).\label{fig:qmltp}}
  \end{figure}
  \paragraph{QMLTP.} For each semantical setting supported by MleanCoP, which is the
  strongest first-order modal logic prover available to date~\cite{DBLP:conf/ecai/BenzmullerOR12},
  the number of theorems found %% mleancop strongest before?
  by both Leo-III and MleanCoP in the QMLTP data set is presented in Fig.~\ref{fig:qmltp}.
  %Leo-III is by no means limited to these semantics.
  Leo-III is fairly competitive with MleanCoP (weaker by maximal % average first
  \SI{14.05}{\percent}, minimal \SI{2.95}{\percent} and
  \SI{8.90}{\percent} on average) for all \textbf{D} and \textbf{T}
  variants.  For all \textbf{S4} variants, the gap between both
  systems increases (weaker by maximal \SI{20.00}{\percent}, minimal
  \SI{13.66}{\percent} and \SI{16.18}{\percent} on average). For
  \textbf{S5} variants, Leo-III is very effective (stronger by
  \SI{1.36}{\percent} on average), and it is ahead of MleanCoP for
  \textbf{S5}/{const} and \textbf{S5}/{cumul} (which coincide).  This is due to
  the encoding of the \textbf{S5} accessibility relation in Leo-III 1.2 as
  the universal relation between possible worlds as opposed to its prior 
  encoding as an equivalence relation~\cite{DBLP:conf/lpar/GleissnerSB17}.
  Note that this technically changes the possible models, but it does not change the set of
  valid theorems.
  Leo-III contributes 199 solutions to previously unsolved problems.
% previous 
%   of the \textbf{S5} systems as opposed to previous encodings of an
%   equivalence 
%   which has been implemented by Leo-III 1.2 for the first time.
%   %  Leo-III's performance is
%   % significantly better than further FO modal logic provers (cf. earlier
%   % evaluations~\cite{C34}).
 
 %For HOML there exist no competitor systems
 % with which Leo-III could be compared. However, Leo-III can e.g.
  %prove all problems from~\cite{C55short}. 
  % performance on non-trivial HOMl problems is that it can
  % automatically solve the problems stemming from the analysis of the
  % ontological argument for God's existence \cite{C55short}.
% in all sets pretty good!

  \paragraph{On polymorphism.}
  The GRUNGE evaluation by Brown et al.~\cite{grunge} aims at comparing
  ATP systems across different supported logics. For this purpose,
  theorems from the HOL4 standard library~\cite{DBLP:conf/tphol/SlindN08}
  are translated into multiple different logical formalisms, including
  untyped first-order logic, typed first-order logic (with and without polymorphic
  types) and higher-order logic (with and without polymorphic types) using
  the different TPTP language dialects as discussed in \S\ref{ssec:proofs}.
  Of the many first-order and higher-order ATP systems that are evaluated on these data sets,
  Leo-III is one of the few to support polymorphic types.\footnote{
    HOLyHammer (HOL ATP) and Zipperposition (first-order ATP) are the only other systems
    supporting polymorphism.
  }
  This seems to be a major strength in the context of GRUNGE: Leo-III is identified
  as the most effective ATP system overall in terms of solved problems in
  any formalism, with approx. 19\% more solutions than the next best system,
  and as the best ATP system in all higher-order formalisms, with up to
  94\% more solutions than the next best higher-order system.
  Remarkably, it can be seen that over 90\% of all solved problems in the GRUNGE evaluation
  are contributed by Leo-III on the basis of the polymorphic higher-order
  data set, and the next best result in any other formalism is down by
  approx. 25\%.
  
  This suggests that reasoning in polymorphic formalisms is of particular benefit
  for applications in mathematics and, possibly, further domains. For systems without
  native support for (polymorphic) types, types are usually encoded as terms, or they
  are removed by monomorphization. This increases the complexity
  of the problem representation and decreases reasoning effectivity.
  Leo-III, on the other hand,
  handles polymorphic types natively and requires no such indirection.

%%%%%%%%%%%%%%%%%%%%%%%%%%%%%%%%%%%%%%%%%%%%%%%%%%%%%%%%%%%%%%%%%%%%%%%%%%%%%
%%%%%%%%%%%%%%%%%%%%%%%%%%%%%%%%%%%%%%%%%%%%%%%%%%%%%%%%%%%%%%%%%%%%%%%%%%%%%
%%%%%%%%%%%%%%%%%%%%%%%%%%%%%%%%%%%%%%%%%%%%%%%%%%%%%%%%%%%%%%%%%%%%%%%%%%%%%

\section{Conclusion and Future Work}\label{sec:conclusion}
Leo-III is an ATP system for classical HOL with Henkin semantics, and it
natively supports also various propositional and quantified non-classical logics.
This includes typed and untyped first-order logic, polymorphic
HOL, and a wide range of HOMLs, which makes Leo-III,
up to our knowledge, the most widely applicable theorem
proving system available to date.
Recent evaluations show that Leo-III is very effective (in terms of problems solved)
and that in particular its extension to polymorphic HOL is practically
relevant.

Future work includes extensions and specializations of Leo-III for selected deontic logics
and logic combinations, with the ultimate goal to support the effective automation of normative
reasoning. Additionally, stemming from the success of polymorphic reasoning in Leo-III,
a polymorphic adaption of the shallow semantical embedding approach for modal logics is planned, potentially improving
modal logic reasoning performance.

%\begin{acknowledgements}
%If you'd like to thank anyone, place your comments here
%and remove the percent signs.
%\end{acknowledgements}

% BibTeX users please use one of
%\bibliographystyle{spbasic}      % basic style, author-year citations
\bibliographystyle{spmpsci}      % mathematics and physical sciences
%\bibliographystyle{spphys}       % APS-like style for physics
%\bibliography{main}   % name your BibTeX data base

\newpage
\appendix
\section{Leo-III Proof of Fig.~\ref{fig:beckerex}}\label{appendix:beckerproof}
\begin{lstlisting}[basicstyle=\scriptsize\ttfamily,frame=single]
% SZS status Theorem for becker.p
% SZS output start CNFRefutation for becker.p
thf(mworld_type,type,(
    mworld: $tType )).

thf(mrel_type,type,(
    mrel: mworld > mworld > $o )).

thf(meuclidean_type,type,(
    meuclidean: ( mworld > mworld > $o ) > $o )).

thf(meuclidean_def,definition,
    ( meuclidean
    = ( ^ [A: mworld > mworld > $o] :
        ! [B: mworld,C: mworld,D: mworld] :
          ( ( ( A @ B @ C )
            & ( A @ B @ D ) )
         => ( A @ C @ D ) ) ) )).

thf(mvalid_type,type,(
    mvalid: ( mworld > $o ) > $o )).

thf(mvalid_def,definition,
    ( mvalid
    = ( ^ [A: mworld > $o] : 
        ! [B: mworld] :
          ( A @ B ) ) )).

thf(mimplies_type,type,(
    mimplies: ( mworld > $o ) > ( mworld > $o ) > mworld > $o )).

thf(mimplies_def,definition,
    ( mimplies
    = ( ^ [A: mworld > $o,B: mworld > $o,C: mworld] :
          ( ( A @ C )
         => ( B @ C ) ) ) )).

thf(mdia_type,type,(
    mdia: ( mworld > $o ) > mworld > $o )).

thf(mdia_def,definition,
    ( mdia
    = ( ^ [A: mworld > $o,B: mworld] :
        ? [C: mworld] :
          ( ( mrel @ B @ C )
          & ( A @ C ) ) ) )).

thf(mbox_type,type,(
    mbox: ( mworld > $o ) > mworld > $o )).

thf(mbox_def,definition,
    ( mbox
    = ( ^ [A: mworld > $o,B: mworld] :
        ! [C: mworld] :
          ( ( mrel @ B @ C )
         => ( A @ C ) ) ) )).

thf(mexists_const__o__d_i_t__d_i_c__type,type,(
    mexists_const__o__d_i_t__d_i_c_: ( ( $i > $i ) > mworld > $o )
                                       > mworld > $o )).

thf(mexists_const__o__d_i_t__d_i_c__def,definition,
    ( mexists_const__o__d_i_t__d_i_c_
    = ( ^ [A: ( $i > $i ) > mworld > $o,B: mworld] :
        ? [C: $i > $i] :
          ( A @ C @ B ) ) )).

thf(mforall_const__o__d_i_t__o_mworld_t__d_o_c__c__type,type,(
    mforall_const__o__d_i_t__o_mworld_t__d_o_c__c_: ( ( $i > mworld > $o )
                                                      > mworld > $o )
                                                      > mworld > $o )).

thf(mforall_const__o__d_i_t__o_mworld_t__d_o_c__c__def,definition,
    ( mforall_const__o__d_i_t__o_mworld_t__d_o_c__c_
    = ( ^ [A: ( $i > mworld > $o ) > mworld > $o,B: mworld] :
        ! [C: $i > mworld > $o] :
          ( A @ C @ B ) ) )).

thf(mforall_const__o__d_i_c__type,type,(
    mforall_const__o__d_i_c_: ( $i > mworld > $o ) > mworld > $o )).

thf(mforall_const__o__d_i_c__def,definition,
    ( mforall_const__o__d_i_c_
    = ( ^ [A: $i > mworld > $o,B: mworld] :
        ! [C: $i] :
          ( A @ C @ B ) ) )).

thf(mforall_const__o__d_i_t__d_i_c__type,type,(
    mforall_const__o__d_i_t__d_i_c_: ( ( $i > $i ) > mworld > $o )
                                        > mworld > $o )).

thf(mforall_const__o__d_i_t__d_i_c__def,definition,
    ( mforall_const__o__d_i_t__d_i_c_
    = ( ^ [A: ( $i > $i ) > mworld > $o,B: mworld] :
        ! [C: $i > $i] :
          ( A @ C @ B ) ) )).

thf(sk1_type,type,(
    sk1: mworld )).

thf(sk2_type,type,(
    sk2: $i > mworld > $o )).

thf(sk3_type,type,(
    sk3: $i > $i )).

thf(sk4_type,type,(
    sk4: $i )).

thf(sk5_type,type,(
    sk5: mworld )).

thf(sk6_type,type,(
    sk6: ( $i > $i ) > mworld )).

thf(1,conjecture,
    ( mvalid
    @ ( mforall_const__o__d_i_t__o_mworld_t__d_o_c__c_
      @ ^ [A: $i > mworld > $o] :
          ( mforall_const__o__d_i_t__d_i_c_
          @ ^ [B: $i > $i] :
              ( mforall_const__o__d_i_c_
              @ ^ [C: $i] :
                  ( mexists_const__o__d_i_t__d_i_c_
                  @ ^ [D: $i > $i] :
                      ( mimplies
                        @ ( mdia @ ( mbox @ ( A @ ( B @ C ) ) ) )
                        @ ( mbox @ ( A @ ( D @ C ) ) ) ) ) ) ) ) ),
    file('becker.p',1)).

thf(2,negated_conjecture,(
    ~ ( mvalid
      @ ( mforall_const__o__d_i_t__o_mworld_t__d_o_c__c_
        @ ^ [A: $i > mworld > $o] :
            ( mforall_const__o__d_i_t__d_i_c_
            @ ^ [B: $i > $i] :
                ( mforall_const__o__d_i_c_
                @ ^ [C: $i] :
                    ( mexists_const__o__d_i_t__d_i_c_
                    @ ^ [D: $i > $i] :
                        ( mimplies
                          @ ( mdia @ ( mbox @ ( A @ ( B @ C ) ) ) )
                          @ ( mbox @ ( A @ ( D @ C ) ) ) ) ) ) ) ) ) ),
    inference(neg_conjecture,[status(cth)],[1])).

thf(5,plain,(
    ~ ! [A: mworld,B: $i > mworld > $o,C: $i > $i,D: $i] :
      ? [E: $i > $i] :
        ( ? [F: mworld] :
            ( ( mrel @ A @ F )
            & ! [G: mworld] :
                ( ( mrel @ F @ G )
               => ( B @ ( C @ D ) @ G ) ) )
       => ! [F: mworld] :
            ( ( mrel @ A @ F )
           => ( B @ ( E @ D ) @ F ) ) ) ),
    inference(defexp_and_simp_and_etaexpand,[status(thm)],[2])).

thf(6,plain,(
    ~ ! [A: mworld,B: $i > mworld > $o,C: $i > $i,D: $i] :
        ( ? [E: mworld] :
            ( ( mrel @ A @ E )
            & ! [F: mworld] :
                ( ( mrel @ E @ F )
               => ( B @ ( C @ D ) @ F ) ) )
       => ? [E: $i > $i] :
          ! [F: mworld] :
            ( ( mrel @ A @ F )
           => ( B @ ( E @ D ) @ F ) ) ) ),
    inference(miniscope,[status(thm)],[5])).

thf(10,plain,(
    mrel @ sk1 @ sk5 ),
    inference(cnf,[status(esa)],[6])).

thf(4,axiom,(
    meuclidean @ mrel ),
    file('becker.p',mrel_meuclidean)).

thf(15,plain,(
    ! [A: mworld,B: mworld,C: mworld] :
      ( ( ( mrel @ A @ B )
        & ( mrel @ A @ C ) )
     => ( mrel @ B @ C ) ) ),
    inference(defexp_and_simp_and_etaexpand,[status(thm)],[4])).

thf(16,plain,(
    ! [C: mworld,B: mworld,A: mworld] :
      ( ~ ( mrel @ A @ B )
      | ~ ( mrel @ A @ C )
      | ( mrel @ B @ C ) ) ),
    inference(cnf,[status(esa)],[15])).

thf(17,plain,(
    ! [C: mworld,B: mworld,A: mworld] :
      ( ~ ( mrel @ A @ C )
      | ( mrel @ B @ C )
      | ( ( mrel @ sk1 @ sk5 )
       != ( mrel @ A @ B ) ) ) ),
    inference(paramod_ordered,[status(thm)],[10,16])).

thf(18,plain,(
    ! [A: mworld] :
      ( ~ ( mrel @ sk1 @ A )
      | ( mrel @ sk5 @ A ) ) ),
    inference(pattern_uni,[status(thm)],
      [17:[bind(A,$thf(sk1)),bind(B,$thf(sk5))]])).

thf(40,plain,(
    ! [A: mworld] :
      ( ~ ( mrel @ sk1 @ A )
      | ( mrel @ sk5 @ A ) ) ),
    inference(simp,[status(thm)],[18])).

thf(9,plain,(
    ! [A: mworld] :
      ( ~ ( mrel @ sk5 @ A )
      | ( sk2 @ ( sk3 @ sk4 ) @ A ) ) ),
    inference(cnf,[status(esa)],[6])).

thf(7,plain,(
    ! [A: $i > $i] :
      ~ ( sk2 @ ( A @ sk4 ) @ ( sk6 @ A ) ) ),
    inference(cnf,[status(esa)],[6])).

thf(11,plain,(
    ! [A: $i > $i] :
      ~ ( sk2 @ ( A @ sk4 ) @ ( sk6 @ A ) ) ),
    inference(simp,[status(thm)],[7])).

thf(206,plain,(
    ! [B: $i > $i,A: mworld] :
      ( ~ ( mrel @ sk5 @ A )
      | ( ( sk2 @ ( sk3 @ sk4 ) @ A )
       != ( sk2 @ ( B @ sk4 ) @ ( sk6 @ B ) ) ) ) ),
    inference(paramod_ordered,[status(thm)],[9,11])).

thf(213,plain,(
    ~ ( mrel @ sk5 @ ( sk6 @ sk3 ) ) ),
    inference(pre_uni,[status(thm)],
      [206:[bind(A,$thf(sk6 @ sk3)),bind(B,$thf(sk3))]])).

thf(257,plain,(
    ! [A: mworld] :
      ( ~ ( mrel @ sk1 @ A )
      | ( ( mrel @ sk5 @ A )
       != ( mrel @ sk5 @ ( sk6 @ sk3 ) ) ) ) ),
    inference(paramod_ordered,[status(thm)],[40,213])).

thf(258,plain,(
    ~ ( mrel @ sk1 @ ( sk6 @ sk3 ) ) ),
    inference(pattern_uni,[status(thm)],[257:[bind(A,$thf(sk6 @ sk3))]])).

thf(8,plain,(
    ! [A: $i > $i] : ( mrel @ sk1 @ ( sk6 @ A ) ) ),
    inference(cnf,[status(esa)],[6])).

thf(12,plain,(
    ! [A: $i > $i] : ( mrel @ sk1 @ ( sk6 @ A ) ) ),
    inference(simp,[status(thm)],[8])).

thf(272,plain,( ~ $true ),
    inference(rewrite,[status(thm)],[258,12])).

thf(273,plain,( $false ),
    inference(simp,[status(thm)],[272])).
 
% SZS output end CNFRefutation for becker.p
\end{lstlisting}

\end{document}